\title[AAMAS-2023 Formatting Instructions]{\name: Multi-Agent Experience Replay via Collective Priority Optimization}
\author{Yongsheng Mei}
\affiliation{
  \institution{The George Washington University}
  \city{Washington, DC}
  \country{United States}}
\email{ysmei@gwu.edu}
\author{Hanhan Zhou}
\affiliation{
  \institution{The George Washington University}
  \city{Washington, DC}
  \country{United States}}
\email{hanhan@gwu.edu}
\author{Tian Lan}
\affiliation{
	\institution{The George Washington University}
	\city{Washington, DC}
	\country{United States}}
\email{tlan@gwu.edu}
\author{Guru Venkataramani}
\affiliation{
	\institution{The George Washington University}
	\city{Washington, DC}
	\country{United States}}
\email{guruv@gwu.edu}
\author{Peng Wei}
\affiliation{
	\institution{The George Washington University}
	\city{Washington, DC}
	\country{United States}}
\email{pwei@gwu.edu}
\begin{abstract}
	Experience replay is crucial for off-policy reinforcement learning (RL) methods. By remembering and reusing the experiences from past different policies, experience replay significantly improves the training efficiency and stability of RL algorithms. Many decision-making problems in practice naturally involve multiple agents and require multi-agent reinforcement learning (MARL) under centralized training decentralized execution paradigm. Nevertheless, existing MARL algorithms often adopt standard experience replay where the transitions are uniformly sampled regardless of their importance. Finding prioritized sampling weights that are optimized for MARL experience replay has yet to be explored. To this end, we propose \name, which formulates optimal prioritized experience replay for multi-agent problems as a regret minimization over the sampling weights of transitions. Such optimization is relaxed and solved using the Lagrangian multiplier approach to obtain the close-form optimal sampling weights. By minimizing the resulting policy regret, we can narrow the gap between the current policy and a nominal optimal policy, thus acquiring an improved prioritization scheme for multi-agent tasks. Our experimental results on Predator-Prey and StarCraft Multi-Agent Challenge environments demonstrate the effectiveness of our method, having a better ability to replay important transitions and outperforming other state-of-the-art baselines.
\end{abstract}
\keywords{Multi-Agent Reinforcement Learning; Experience Replay; Priority Optimization}
\newcommand{\BibTeX}{\rm B\kern-.05em{\sc i\kern-.025em b}\kern-.08em\TeX}
\newtheorem{theorem}{Theorem}
\newtheorem{lemma}{Lemma}
\newtheorem{remark}{Remark}
\newtheorem{definition}{Definition}
\newtheorem{assumption}{Assumption}
\newcommand{\p}{\mathop{}\!{\partial}}
\newcommand{\T}{^\mathrm{T}}
\newcommand{\bu}{\mathbf{u}} 
\newcommand{\btau}{\bm{\tau}} 
\newcommand{\bpi}{\bm{\pi}} 
\newcommand{\name}{MAC-PO\xspace}
\newcommand{\eqdef}{\stackrel{\mathsf{def}}{=}}
\begin{document}


\pagestyle{fancy}
\fancyhead{}


\maketitle 


\section{Introduction}

Reinforcement learning (RL) has demonstrated great success in solving challenging problems~\cite{kalashnikov2018scalable, shani2005mdp}. For off-policy RL, experience replay mechanism~\cite{lin1992self, mnih2015human} allows utilizing history experiences in the replay buffer that stores the most recently collected transitions for training. It has been shown to significantly improve policy learning and RL algorithms' stability. Due to these benefits, various approaches~\cite{schaul2016prioritized, zha2019experience, sinha2022experience} for computing priority scores of experiences have been proposed for single-agent RL. For instance, prioritized experience replay (PER)~\cite{schaul2016prioritized} leverages predefined metrics for prioritizing experience based on the temporal-difference (TD) error related to the loss of the critic network. It calculates the sampling probabilities proportional to the magnitude of TD error, resulting in a non-uniform sampling/prioritization scheme in Q-learning~\cite{watkins1992q}. 

In practice, we often face RL tasks involving multiple agents sharing the same environment, e.g., in autonomous driving~\cite{cao2012overview, hu2019interaction} and robotics and planning~\cite{matignon2012coordinated, levine2016end, huttenrauch2017guided}. To coordinate multiple agents and learn desired joint behavior from their collective experiences, we require multi-agent reinforcement learning (MARL)~\cite{vinyals2019grandmaster, jaques2019social, baker2019emergent}, such as value-based methods QMIX~\cite{rashid2018qmix} and QPLEX~\cite{wang2020qplex}, or policy-based methods COMA~\cite{foerster2018counterfactual} and MADDPG~\cite{lowe2017multi}. These approaches leverage centralized training decentralized execution (CTDE)~\cite{kraemer2016multi} and often employ standard memory replay buffers with a uniform sampling of transition history. However, in MARL problems, such a standard sampling strategy of the replay buffer cannot reflect the dynamics in the environment caused by multi-agent interactions. Therefore, indiscriminately training from past experiences will make agents less capable of using experiences optimally. Although we can impart existing single-agent prioritization schemes directly to the joint action-value function of MARL, such a naive application is oblivious to the interaction between multiple agents in the shared environment and may lead to sub-optimal performance. Thus, priority optimization for experience replay in MARL is still an open problem.

To this end, we propose \name, which formulates MARL prioritized experience replay problems as a regret minimization over the sampling weights of different state-action values. Specifically, we define policy regret as the difference between the expected discounted reward of a nominal optimal policy and that of the current policy under given sampling weights. By minimizing such a policy regret by considering its upper bound, we can narrow the gap between the optimal and current policies with respect to the sampling weights, leading to an optimal solution of sampling weights with minimum regret. We note that similar regret minimization techniques have been employed in single-agent RL settings~\cite{liu2021regret}. Our paper expands it to analyze multi-agent prioritized experience replay and develops new solutions, e.g., to handle joint actions of multiple agents and to analyze concurrent optimality constraints. It turns out that the optimal sampling weights in MARL now depend on the collective policies of decentralized agents. To the best of our knowledge, this is the first proposal for optimizing prioritized experience replay in cooperative MARL.

In particular, we show that the proposed optimization can be solved via the Lagrangian multiplier method~\cite{bertsekas2014constrained} considering an upper bound of the regret. Since we focus on multi-agent prioritized experience replay problems, the optimization objective is defined by the joint policy of all agents. Therefore, when we further analyze the Lagrangian conditions for optimality, the agents' conditions depend on each other and become a vector form. Further, by examining a weighted Bellman equation, we leverage the implicit function theorem~\cite{krantz2002implicit} for multiple agents and apply a group of Karush–Kuhn–Tucker (KKT)~\cite{ghojogh2021kkt} conditions to find the optimal sampling weights in closed form. 

Our results illuminate the key principles contributing to optimal sampling weights in multi-agent prioritized experience replay. The optimal sampling weights can be interpreted to consist of four components: Bellman error, value enhancement, on-policiness of available transitions, and a new term depending on joint action probabilities. While the first three have been identified in single agent settings~\cite{liu2021regret, kumar2020discor}, our paper shed light on a new term - as a function of joint action probabilities - to reveal that optimal sampling weights of multi-agent prioritized experience replay should depend on the interaction among all the agents within an environment. More specifically, we should assign the highest sampling weights to transitions only if one agent's action probability is small in the transition while all other agents' action probabilities are large. The result -- slightly counter-intuitive since higher weights are assigned to transitions with more differentiated action probabilities (rather than similar ones) -- is quantified and formalized as a new theorem in our paper. Based on this result, we also present an approximated solution for estimating sampling weights in problems involving many agents or having limited information for an exact solution.

Following the theoretical analysis, we propose a MARL algorithm, \name, for multi-agent prioritized experience replay via regret minimization. Like existing methods, \name can be plugged into any MARL algorithms with a memory replay buffer. We validate the effectiveness of \name in StarCraft Multi-Agent Challenge (SMAC)~\cite{samvelyan2019starcraft} and Predator-Prey~\cite{bohmer2020deep} through comparison with other single-agent experience replay methods (adapted to MARL problems by considering all agents as a conceptual agent). Moreover, we also compared \name with state-of-the-art MARL algorithms. In the experiments, \name demonstrates improved convergence and superior empirical performance.

The main contributions of our work are as follows:
\begin{itemize}
	\item We propose a novel method, \name, which formulates multi-agent experience replay as a policy regret minimization and solves the optimal sampling weights in closed form.
	\item The theoretical results illuminate a new factor in optimal sampling weights and motivate the design of new MARL experience replay algorithms with both exact and approximated weights.
	\item Experiment results of \name in SMAC and Predator-Prey environments demonstrate superior convergence and empirical performance over various baselines, including experience replay and state-of-the-art MARL methods.
\end{itemize}

\section{Background}

\subsection{Partially Observable Markov Decision Process}

In this work, we consider a multi-agent sequential decision-making task as a decentralized partially observable Markov decision process (Dec-POMDP)~\cite{oliehoek2016concise} consisting of a tuple $ G=\langle S,U,P,R,Z,O,n,\gamma \rangle $, where $ s \in S $ describes the global state of the environment. At each time step, each agent $ a \in A \equiv \{ 1,\dots,n \} $ selects an action $ u_a \in U $, and all selected actions combine and form a joint action $ \mathbf{u} \in \mathbf{U} \equiv U^n $. Such a process leads to a transition in the environment based on the state transition function $ P(s'|s,\mathbf{u}):S \times \mathbf{U} \times S \rightarrow [0,1] $. All agents share the same reward function $ r(s,\mathbf{u}):S \times \mathbf{U} \rightarrow \mathbb{R} $ with a discount factor $ \gamma \in [0,1) $.

In the partially observable environment, the agents' individual observations $ z \in Z $ are generated by the observation function $ O(s,u):S \times A \rightarrow Z $. Each agent has an action-observation history $ \tau_a \in T \equiv (Z \times U)^*$. Conditioning on the history, the policy becomes $ \pi^a(u_a|\tau_a):T \times U \rightarrow [0,1] $. The joint policy $ \bpi $ has a joint action-value function: $ Q^{\bpi}(s_t, \mathbf{u}_t)=\mathbb{E}_{s_{t+1:\infty},\mathbf{u}_{t+1:\infty}}[R_t|s_t,\mathbf{u}_t] $, where $ t $ is the timestep and $R_t=\sum_{i=0}^{\infty} \gamma^i r_{t+i}$ is the discounted return. In this paper, we adopt the CTDE mechanism. The learning algorithm has access to all local action-observation histories $ \btau $ and global state $ s $ during training, yet every agent can only access its individual history in execution. Although we compute individual policy based on histories in practice, following the existing work~\cite{su2022divergence}, we will use $ \pi^a(u_a|s) $ in analysis and proofs for simplicity.

\subsection{Policy Regret}

In MARL, we aim to find a joint policy $ \bpi $ that can maximize the expected return: $ \eta(\bpi)=\mathbb{E}_{\bpi}[\sum_{i=0}^{\infty} \gamma^i r_{t+i}] $. For a fixed policy, the Markov decision process becomes a Markov reward process, where the discounted state distribution is defined as $ d^{\bpi}(s) $. Similarly, the discounted state-action distribution is defined as $ d^{\bpi}(s,\bu)=d^{\bpi}(s)\bpi(\bu|s) $. Then, we will have the expected return rewritten as $ \eta(\bpi)=\frac{1}{1-\gamma} \mathbb{E}_{d^{\bpi}(s,\mathbf{u})}[r(s,\mathbf{u})] $.

We assume a nominal optimal joint policy $ \bpi^* $ such that $\bpi^*=\arg\max_{\bpi}\eta(\bpi) $. The regret of the joint policy $ \bpi $ is the difference between the expected discounted reward of an optimal policy and that of the current policy as $ \textrm{regret}(\bpi)=\eta(\bpi^*)-\eta(\bpi) $. The policy regret measures the expected loss when following the current policy $ \bpi $ instead of optimal policy $ \bpi^* $. Since $ \eta(\bpi^*) $ is a constant, minimizing the regret is consistent with maximizing of expected return $ \eta(\bpi) $. In this paper, we use regret as an alternative optimization objective for finding the optimal sampling weight in MARL tasks, along with multiple constraints, such as the Bellman equation. By minimizing the regret, the current joint policy $ \bpi_k $ of all agents' actions will approach the optimum $ \bpi^* $.

\subsection{Connection of Prioritized Sampling and Weighted Loss Function}

The design of prioritized sampling methods is not isolated from the loss function. Instead, the expected gradient of a loss function with non-uniform sampling is equivalent to that of a weighted loss function with uniform sampling, which facilitates the design of prioritized sampling algorithms~\cite{fujimoto2020equivalence}. Given a data sample set $ D $ of size $ d $, a regular loss function $ L_1 $ where we use a specific priority scheme $ pr(\cdot) $ to sample the transitions, and another loss function $ L_2 $ whose transitions are sampled uniformly, the two approaches are equivalent if we have the following requisition satisfied: $$ \nabla_Q L_1 = \frac{\chi}{pr}\nabla_Q L_2, $$ where $ \chi=\frac{\sum_{i}pr(i)}{d} $ and $ i \in D $ is the uniformly sampled instance.

We can leverage such equivalence to analyze the correctness of approaches using non-uniform sampling by transforming the loss into the uniform-sampling equivalent or considering whether the new loss is in line with the target objective. It also provides a recipe for transforming a regular loss function $ L_1 $ with a non-uniform sampling scheme into an equivalent weighted loss function $ L_2 $ with uniform sampling.

\section{Related Works}
\label{sec:related}

\subsection{MARL Algorithms}

MARL algorithms have developed into neural-network-based methods that can cope with high-dimensional state and action spaces. Early methods practice finding policies for a multi-agent system by directly learning decentralized value functions or policies. For example, independent Q-learning~\cite{tan1993multi} trains independent action-value functions for each agent via Q-learning. \cite{tampuu2017multiagent} extends this technique to DQN~\cite{mnih2015human}. Recently, approaches for CTDE have come up as centralized learning of joint actions that can conveniently solve coordination problems without introducing non-stationary. COMA~\cite{foerster2018counterfactual} uses a centralized critic to train decentralized actors to estimate a counterfactual advantage function for every agent. Similar works~\cite{gupta2017cooperative, lowe2017multi} are also proposed based on such analysis. Under CTDE manner, value decomposition approaches~\cite{guestrin2002coordinated, castellini2019representational} are widely used in value-based MARL. Such methods integrate each agent's local action-value functions through a learnable mixing function to generate global action values. For instance, QMIX~\cite{rashid2018qmix} estimates the optimal joint action-value function by combining mentioned utilities via a continuous state-dependent monotonic function generated by a feed-forward mixing network with non-negative weights. QTRAN~\cite{son2019qtran} and QPLEX~\cite{wang2020qplex} further extend the class of value functions that can be represented. ReMIX~\cite{mei2023remix} provides a factorization weighting scheme to find the optimal projection of an unrestricted mixing function onto monotonic function classes. PAC\cite{zhou2022pac} and LAS-SAC\cite{zhou2022value} proposes to use latent assisted information~\cite{mei2023exploiting} as extra-state information for better value factorization. Aside from methods focusing on tackling cooperative problems, other mechanisms can also solve competitive problems or mixed problems. MADDPG~\cite{lowe2017multi} utilizes the ensemble of policies for each agent that leads to more robust multi-agent policies, showing strength in cooperative and competitive scenarios. Beyond that, the extensions~\cite{iqbal2019actor, su2021value, gogineni2023scalability} of MADDPG have been proposed to realize further optimization towards the original algorithm. In this paper, we focus on the cooperative setting and leverage a standard QMIX with a monotonic mixing network, along with an unrestricted QMIX~\cite{rashid2020weighted} without a monotonic function for retrieving the optimal joint policy.

\subsection{Single-Agent Experience Replay}

Many RL algorithms adopt prioritization to increase the learning speed, initially originating from prioritized sweeping for value iteration~\cite{moore1993prioritized, van2013planning}. Besides, they have also been used in other modern applications, such as learning from demonstrations~\cite{hester2018deep}. Prioritized experience replay~\cite{schaul2016prioritized} is one of several popular improvements to the DQN algorithms~\cite{van2016deep, wang2016dueling} and has been included in many algorithms combining multiple improvements~\cite{horgan2018distributed, barth2018distributed}. Variations of PER have been proposed for considering sequences of transitions~\cite{daley2019reconciling, brittain2019prioritized} or optimizing the prioritization function~\cite{zha2019experience}. Furthermore, to favor recent transitions without explicit prioritization, alternate replay buffers have been raised~\cite{novati2019remember}. \cite{de2015importance, zhang2017deeper} studied the composition and size of the replay buffer, and \cite{liu2018effects} looked into prioritization in simple environments. Other important sampling approaches also greatly improved the performance. \cite{kumar2020discor} re-weights updates to reduce variance. \cite{liu2021regret} uses the regret minimization method to design the prioritized experience replay scheme for the only agent in the environment. MaPER~\cite{oh2021model} employs model learning to improve experience replay by using a model-augmented critic network and modifying the rule of priority. Also, new loss function designs can help develop prioritization schemes~\cite{sujit2022prioritizing}. So far, most works about experience replay are designed for single-agent reinforcement learning, and a limited number of works~\cite{wang2019experience, fan2020prioritized, ahilan2021correcting} investigate the possible extensions. In this paper, we proposed \name for MARL tasks by considering the interaction among multiple agents through collective priority optimization to seek an optimal multi-agent prioritization mechanism.

\section{Methodology}

\subsection{Problem Formulation}

Let $ Q_k $ denote the action-value function at iteration $ k $. We leverage $ \mathcal{B}^*Q_{k-1} $ as the target with a Bellman operator $\mathcal{B}^*$ and update $ Q_k $ in tandem using a weighted Bellman equation: $Q_k=\arg\min_{Q \in \mathcal{Q}} \mathbb{E}_{\mu}[w_k(s,\mathbf{u})(Q-\mathcal{B}^*Q_{k-1})^2(s,\mathbf{u})]$, where $w_k(s,\mathbf{u})$ represent non-negative sampling weights for different transitions that need to be optimized for the experience replay.

To formulate the policy regret with respect to the joint action-value function, we consider a Boltzmann policy $\bpi_k$ corresponding to each agent's individual utilities $ Q^a_k $, i.e., $ \bpi_k=[\pi^1_k,...,\pi^n_k]\T $ and $ \pi^a_k={e^{Q^a_k(\tau_a,u_a)}}/{\sum_{\tau_a,u_a'} e^{Q^a_k(\tau_a,u_a')}} $. Our objective is to minimize the policy regret $ \eta(\bpi^*)-\eta(\bpi) $ over non-negative sampling weights under relevant constraints, i.e., 
\begin{equation}
	\begin{aligned}
		\min_{w_k} \quad &\eta(\bpi^*)-\eta(\bpi_k) \\
		\textrm{s.t.} \quad &Q_k=\arg\min_{Q \in \mathcal{Q}} \mathbb{E}_{\mu}[w_k(s,\bu)(Q-\mathcal{B}^*Q_{k-1})^2(s,\bu)], \\
		&\mathbb{E}_{\mu}[w_k(s,\bu)]=1, \quad w_k(s,\bu) \ge 0,
	\end{aligned}
	\label{eq:obj_1}
\end{equation}
where $ \bpi_k $ and $ \bpi_k^* $ are Boltzmann policies for the current and nominal optimal policy, and the latter can be obtained from another network. The sampling weights must sum up to 1, and $ \mu $ is the distribution that we uniformly sample data from the replay buffer. An additional table to summarize and explain the common notations is provided in Appendix~\ref{subsec:notation}.

\subsection{Solving Optimal Sampling Weights for Experience Replay}

Our goal is to seek the optimal priority by minimizing the regret at every iteration $ k $, with respect to the weight $ w $ used for Bellman error minimization at iteration $ k $. For this purpose, we consider an upper bound of the relaxed regret objective and formulate its Lagrangian by introducing Lagrangian multipliers regarding the constraints. It allows us to solve the proposed regret-minimization problem and obtain optimal projection weights in closed form (albeit with a normalization factor $ Z^* $). 

\begin{theorem}[Optimal sampling weight]
	The optimal weight $ w_k(s,\mathbf{u}) $ to a relaxation of the regret minimization problem in Equation~\eqref{eq:obj_1} with discrete action space is given by:
	\begin{equation}
		w_k(s,\mathbf{u})=\frac{1}{Z^*}(E_k(s,\mathbf{u})+\epsilon_k(s,\mathbf{u})),
		\label{eq:w_k}
	\end{equation}
	where we have:
	\begin{equation}
		\begin{aligned}
			E_k(s,\mathbf{u})=&\frac{d^{\bpi_k}(s,\mathbf{u})}{\mu(s,\mathbf{u})}|Q_k-\mathcal{B}^*Q_{k-1}| \\
			&\cdot \exp(-|Q_k-Q^*|)\left(1+\sum_{i=1}^{n}\prod_{\substack{j=1 \\ j \neq i}}^{n}\pi^j_k-n\prod_{i=1}^{n}\pi^i_k\right),
		\end{aligned}
		\label{eq:e_k}
	\end{equation}
	where $ Z^* $ is the normalization factor, and $\epsilon_k(s,\mathbf{u})$ is a negligible term when the probability of reversing back to the visited state is small or the number of steps agents take to revisit a previous state is large.
	\label{theo:weight}
\end{theorem}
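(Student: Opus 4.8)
The plan is to convert the constrained regret-minimization in Equation~\eqref{eq:obj_1} into a form whose stationarity conditions can be solved in closed form. Since $\eta(\bpi^*)$ is constant, minimizing $\eta(\bpi^*)-\eta(\bpi_k)$ is the same as maximizing $\eta(\bpi_k)$, but $\eta(\bpi_k)$ depends on $w_k$ only indirectly, through the chain $w_k \rightarrow Q_k \rightarrow \bpi_k$. First I would replace the exact regret by a tractable upper bound: using a performance-difference argument together with the identity $\eta(\bpi)=\tfrac{1}{1-\gamma}\mathbb{E}_{d^{\bpi}(s,\bu)}[r(s,\bu)]$, the regret can be bounded by a weighted integral of the suboptimality $|Q_k-Q^*|$ against the discounted occupancy $d^{\bpi_k}$. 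Exponentiating (e.g.\ via $1-x\le e^{-x}$) turns the suboptimality penalty into the factor $\exp(-|Q_k-Q^*|)$ and introduces the importance ratio $d^{\bpi_k}(s,\bu)/\mu(s,\bu)$, which accounts for the value-enhancement and on-policiness components already appearing in Equation~\eqref{eq:e_k}. The residual contributions discarded in this relaxation — those from trajectories that loop back to an already-visited state — are collected into $\epsilon_k(s,\bu)$, which is negligible exactly under the stated condition that revisiting a state is unlikely or takes many steps.

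Next I would use the weighted Bellman constraint to make the dependence of $Q_k$ on $w_k$ explicit. Because $Q_k$ is the minimizer of $\mathbb{E}_\mu[w_k (Q-\mathcal{B}^*Q_{k-1})^2]$, its first-order optimality condition $\mathbb{E}_\mu[w_k (Q_k-\mathcal{B}^*Q_{k-1})]=0$ defines $Q_k$ implicitly as a function of $w_k$. Applying the implicit function theorem to this condition yields $\partial Q_k/\partial w_k$ in terms of the Bellman residual $(Q_k-\mathcal{B}^*Q_{k-1})$, which is where the $|Q_k-\mathcal{B}^*Q_{k-1}|$ factor originates. With this derivative in hand, I would form the Lagrangian of the relaxed objective using multipliers for the normalization constraint $\mathbb{E}_\mu[w_k]=1$ and the non-negativity constraint $w_k\ge 0$, and write the resulting KKT stationarity condition together with complementary slackness.

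The genuinely multi-agent step is differentiating the relaxed regret through the joint Boltzmann policy. Since $\bpi_k$ factorizes as $\prod_{a=1}^n \pi^a_k$ with each $\pi^a_k$ a softmax of $Q^a_k$, the chain rule for $\partial \eta/\partial Q_k$ must pass through every agent's policy, and the product rule produces, for each agent $i$, a factor $\prod_{j\neq i}\pi^j_k$ (the joint probability with agent $i$ removed) multiplied by the softmax-Jacobian cross terms of $Q^i_k$. Summing these coupled contributions over the $n$ agents and collecting the full-product terms is what should assemble the new factor $1+\sum_{i=1}^n\prod_{j\neq i}\pi^j_k-n\prod_{i=1}^n\pi^i_k$. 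Unlike the single-agent case of~\cite{liu2021regret}, the KKT system is now vector-valued, one equation per agent, and these equations are interdependent through the shared weight $w_k$. I expect this to be the main obstacle: tracking the coupled per-agent derivatives, verifying that the cross terms recombine into exactly this symmetric product expression, and justifying that the vector KKT conditions admit a single consistent scalar weight $w_k(s,\bu)$.

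Finally, I would solve the stationarity condition for $w_k$. The non-negativity constraint forces the Bellman residual and the suboptimality to enter through their magnitudes, giving $|Q_k-\mathcal{B}^*Q_{k-1}|\exp(-|Q_k-Q^*|)$, and combining this with the occupancy ratio and the policy factor yields $E_k(s,\bu)$ as in Equation~\eqref{eq:e_k}. The normalization constraint $\mathbb{E}_\mu[w_k]=1$ then fixes the Lagrange multiplier, which is precisely the normalization factor $Z^*$, producing the claimed closed form $w_k=\tfrac{1}{Z^*}(E_k+\epsilon_k)$. The last checks are that the solution respects complementary slackness and that $\epsilon_k$ can indeed be dropped under the stated recurrence assumption, which completes the argument.
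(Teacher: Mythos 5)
Your proposal follows essentially the same route as the paper's proof: bound the regret via a performance-difference argument, pass to an exponential surrogate of the suboptimality $|Q_k-Q^*|$, apply the implicit function theorem to the first-order condition of the weighted Bellman constraint to obtain the $|Q_k-\mathcal{B}^*Q_{k-1}|$ factor, form the Lagrangian with KKT stationarity and complementary slackness, and recover the joint-action-probability factor by the product rule over the factored Boltzmann policies. The skeleton is sound and each of these ingredients appears in the paper's argument, including your correct identification that the vector-valued, inter-agent-coupled stationarity conditions are the genuinely new multi-agent difficulty.

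Two terms are misattributed, however, and one of the misattributions hides a step you would still have to supply. First, minor: the ratio $d^{\bpi_k}(s,\bu)/\mu(s,\bu)$ does not come from the exponentiation; the paper optimizes over $p_k = w_k\,\mu$ (absorbing the sampling distribution into the weight) and the ratio appears only when converting the solution back via $w_k = p_k/\mu$. Second, and more substantively: $\epsilon_k$ is \emph{not} a residual of the regret relaxation. In the paper it arises inside the KKT stationarity condition, when differentiating $d^{\bpi_k}(s,\bu)=d^{\bpi_k}(s)\,\bpi_k(\bu|s)$ with respect to $p_k$: besides the product-rule terms over the agents' action probabilities (which give $\sum_{i}\prod_{j\neq i}\pi^j_k-n\prod_i\pi^i_k$), one gets the term $\p d^{\bpi_k}(s)/\p \bpi_k$ --- the sensitivity of the discounted \emph{state occupancy} to the policy --- and it is exactly this quantity that the paper bounds by the discounted probability of revisiting a state (its Lemma on the occupancy derivative), yielding $\epsilon_k$. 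Your plan differentiates ``through every agent's policy'' but accounts only for the action-selection part of $d^{\bpi_k}(s,\bu)$; the occupancy part is neither zero nor obviously negligible without the revisit argument, so as written your stationarity condition would not reduce to the claimed closed form. Since you placed the revisit-related error in the relaxation step instead, you would either double-count it or be unable to justify dropping the occupancy derivative when solving for $w_k$.
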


\begin{proof}[Proof (Sketch)]
	We give a sketch of the steps involved for completeness below. The complete proof is provided in the Appendix~\ref{proof:theo_1}. The derivation of optimal weights consists of the following major steps: (i) Use a relaxation and Jensen's inequality to obtain a more tractable upper bound of the regret objective for minimization. (ii) Formulate the Lagrangian for the new optimization problem and analyze its KKT conditions. (iii) Compute various terms in the KKT condition and, in particular, analyze the gradient of $ Q_k $ with respect to weights $ p_k $ (defined through the weighted Bellman equation) by leveraging the implicit function theorem (IFT). (iv) Derive the optimal projection weights in closed form by setting the Lagrangian gradient to zero and applying KKT and its slackness conditions.
	
	{\em Step 1: Relaxing the objective and using Jensen's Inequality.} To begin with, we replace the original optimization objective function, the policy regret, with a relaxed upper bound. This replacement can be achieved through the following inequality:
	\begin{equation}
		\eta(\pi^*)-\eta(\pi_k) \le \mathbb{E}_{d^{\bpi_k}(s,\mathbf{u})}[|Q_k-Q^*|(s,\mathbf{u})].
		\label{eq:obj_2}
	\end{equation}
	The proof of this result is given in the appendix. The key idea is to rewrite the regret using the expectation of the action-value functions with respect to discounted state distribution $ d^{\pi_k} $. After that, we adopt Jensen's inequality~\cite{mcshane1937jensen} to continue relaxing the intermediate objective function. 
%
	Consider a convex function $ g(x) = \exp(-x) $, a new optimization objective relaxed via Jensen's inequality generated from Equation~\eqref{eq:obj_2} becomes:
	\begin{equation}
		\min_{w_k} \quad -\log\mathbb{E}_{d^{\bpi_k}(s,\mathbf{u})}[\exp(-|Q_k-Q^*|)(s,\mathbf{u})],
		\label{eq:obj_3}
	\end{equation}
	where the constraints still hold for the new optimization objective.
	
	{\em Step 2: Computing the Lagrangian.} In this step, we leverage the Lagrangian multiplier method to solve the new optimization problem in Equation~\eqref{eq:obj_3}. For simplicity, we use $ p_k $ that absorbs the data distribution $ \mu $ into $ w_k $. The constructed Lagrangian is:
	\begin{equation*}
		\begin{aligned}
			\mathcal{L}(p_k;\lambda,\psi)=&-\log\mathbb{E}_{d^{\bpi_k}(s,\mathbf{u})}[\exp(-|Q_k-Q^*|)(s,\mathbf{u})] \\
			&+\lambda(\sum_{s,\mathbf{u}}p_k-1)-\psi\T p_k,
		\end{aligned}
	\end{equation*}
	where $ p_k $ is the weight $ w_k $ multiplied by the data distribution $ \mu $, and $ \lambda,\psi $ are the Lagrange multipliers.
	
	{\em Step 3: Computing the Gradients Required in the Lagrangian.} According to the first constraint in Equation~\eqref{eq:obj_1}, the gradient $ \frac{\p Q_k}{\p p_k} $ can be computed via IFT given by:
	\begin{equation*}
		\frac{\p Q_k}{\p p_k} = -[\mathrm{diag}(p_k)]^{-1}[\mathrm{diag}|Q_k-\mathcal{B}^*Q_{k-1}|].
	\end{equation*}
	
	We also derive the gradient $ \frac{\p d^{\bpi_k}(s,\mathbf{u})}{\p p_k} $ for solving the Lagrangian. The derivation details are given in the appendix.
	
	{\em Step 4: Deriving the Optimal Weight.} After having the equation for two gradients and an expression of the Lagrangian, we can compute the optimal $ p_k $ via an application of the KKT conditions, which needs to set the partial derivative of the Lagrangian equaling to zero, as:
	\begin{equation*}
		\frac{\p \mathcal{L}(p_k;\lambda,\psi)}{\p p_k}=0,
	\end{equation*}
	where the optimal weight $ w_k $ can be acquired from the $ p_k $.
	
\end{proof}

The theoretical results shed light on the key factors determining an optimal sampling weight for experience replay. Specifically, the optimal weights consist of four components relating to the Bellman error, the value enhancement, the joint action probability, and the on-policiness of available transitions. We will interpret these four components next, provide the analyses of some special cases in which the transitions will be assigned with higher weights, and develop a deep MARL algorithm through approximations of the optimal sampling weights.

{\em Bellman error $ |Q_k-\mathcal{B}^*Q_{k-1}| $}: is the estimation of the action value function after the Bellman update. This term measures the distance between the estimation and the Bellman target. A significant difference in this term means higher hindsight Bellman error and will lead to higher sampling weight assignment. This character is also similar to the prioritization criterion used in PER, which nevertheless considers more about the Bellman error in the previous iterations, i.e., $ |Q_{k-1}-\mathcal{B}^*Q_{k-2}| $.

{\em Value enhancement $ \exp(-|Q_k-Q^*|) $}: As we compute the absolute value between the current and optimal action-value function, the value enhancement term indicates that any transitions with less accurate action values compared to the optimal value estimation (i.e., a wider gap between $ Q_k $ and $ Q^* $) after the Bellman update should be assigned with lower weights. Conversely, a high sampling weight will be given if the current action value is approaching the optimal one.

{\em Joint action probability $ 1+\sum_{i=1}^{n}\prod_{j=1,j \neq i}^{n}\pi^j_k-n\prod_{i=1}^{n}\pi^i_k $}: The agent policies determine the probabilities of choosing certain actions. This result turns out that the optimal sampling weights depend on the individual policy of each agent as well, which is unique in the MARL task. According to this term, higher sampling weights will be assigned to transitions only if one agent's action probability is small in the transition while all other agents' action probabilities are large. This is a little counter-intuitive because we give higher weights to transitions with more differentiated action probabilities rather than similar ones. We will provide a thorough analysis in section~\ref{subsec:case} regarding studying the condition for the highest weight assignment in the general multi-agent scenario.

{\em Measurement of on-policy transitions $ \frac{d^{\bpi_k}(s,\mathbf{u})}{\mu(s,\mathbf{u})} $: }	
The efficient update of the joint action value function can be achieved by focusing on transitions that are more possibly to be visited by the current policy, i.e., with a higher $d^{\bpi_k}(s,\mathbf{u})$. Such strategy has been empirically studied in existing works~\cite{sinha2022experience}. Adding this term can speed up the search for the optimal $ Q_k $ close to $ Q^* $.

\subsection{Approximated Weights via Joint Action Probability Studies}
\label{subsec:case}

Theorem~\ref{theo:weight} shows terms determining the sampling weights needed for transitions, where a function of the joint action probability is the new result for MARL tasks. Although numerical calculation for the joint action probability is available, to lower the computational complexity when the environment has many agents involved, we develop an approximated weighting scheme that can determine the joint action probability via action probabilities and action-value functions of agents. For this purpose, we present a new theorem indicating the condition for obtaining maximum probability and several special case studies.

For the environment, we consider a general MARL scenario with agent space of $ n $, where we have $ a \in A \equiv \{1,\dots,n\} $. Every step, each agent $ a $ selects an action from its action space $ U^a $, following $ u_a^i \in U^a \equiv \{u_a^1,...,u_a^{m_a}\} $, where $ m_a $ is the size of action space of agent $ a $. Let the $ \bar{u}_a \in U^a $ denote the selected action of agent $ a $ at the step $ k $ from the action space. Due to the CTDE manner of MARL algorithms, the joint action value function space $ \mathcal{Q} $ contains the combinations of $ u_a^{i} $ ($ i $ ranges from 1 to $ m_a $) for each agent $ a $. For simplicity, we use $ Q_i $ to be the shorthand of $ Q(s, u_1^{i_1}, \dots, u_n^{i_n}) $, which represent a random action value function from $ \mathcal{Q} $ space. In particular, considering one selected action combination $ \mathbf{\bar{u}}=(\bar{u}_1,\dots,\bar{u}_n) $, the joint action-value function is $ \bar{Q} = Q(s, \mathbf{\bar{u}}) $. Since we use Boltzmann policy to compute the action probability, for one agent $ a \in A $ with the action $ \bar{u}_a $, its individual policy is:
\begin{equation}
	\begin{aligned}
		\pi_k^{a}=\frac{e^{\mathbb{E}_{u_{-a}^{j}\sim\mu}Q(s,\bar{u}_a,u_{-a}^{j})}}{\sum_{i=1}^{m_{a}} e^{\mathbb{E}_{u_{-a}^{j}\sim\mu}Q(s,u_{a}^i,u_{-a}^{j})}} 
		=\frac{e^{\sum_{-a}\sum_{j}\mu Q(s,\bar{u}_a,u_{-a}^{j})}}{\sum_{i=1}^{m_a} e^{\sum_{-a}\sum_{j}\mu Q(s,u_{a}^i,u_{-a}^{j})}},
	\end{aligned}
	\label{eq:pi_a}
\end{equation}
where $ -a $ represents all the agents except for target agent $ a $, and $ \mu $ is short for $ \mu(s,u_1^{i_1},\dots,u_n^{i_n}) $ representing the data distribution.


Under the general environmental setting, the state $ s $ will be fixed at each iteration, and the size of the action value function space $ \mathcal{Q} $ is $ \prod_{a=1}^{n}m_a $ with the dimension of $ n $. Let the following function denotes the joint action probability:
\begin{equation}
	f \eqdef 1+\sum_{i=1}^{n}\prod_{\substack{j=1 \\ j \neq i}}^{n}\pi^j_k-n\prod_{i=1}^{n}\pi^i_k,
	\label{eq:joint_prob}
\end{equation}
and we will provide another theorem indicating the conditions where we can acquire the maximum value of joint probability $ f $ in Equation~\eqref{eq:e_k}.

\begin{theorem}[Maximum probability conditions]
	Considering a selected action value $ \bar{Q} $ with action combination $ \mathbf{\bar{u}} $ of the step $ k $, the joint action probability function reaches its maximum $ f_{\rm max} $ if and only if the value of each action probability $ \pi_k^i $ is on the boundary (i.e., either 0 or 1) as well as at least one probability $ \pi_k^a $ equals to 0.
	\label{theo:cond}
\end{theorem}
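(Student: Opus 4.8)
The plan is to treat the $n$ action probabilities $\pi_k^1,\dots,\pi_k^n$ as free variables $x_i \eqdef \pi_k^i$ ranging independently over $[0,1]$ (they are the per-agent probabilities of choosing their selected actions, uncoupled across agents), and to maximize the function $f$ of Equation~\eqref{eq:joint_prob} over the hypercube $[0,1]^n$. The central observation I would establish first is that $f$ is \emph{multilinear}, i.e.\ affine in each coordinate $x_i$ when the remaining coordinates $x_{-i}$ are held fixed. This is immediate from inspecting the three groups of terms in \eqref{eq:joint_prob}: the constant $1$, the sum $\sum_{i}\prod_{j\neq i}x_j$, and $-n\prod_{i}x_i$. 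In the sum, the $i$-th summand $\prod_{j\neq i}x_j$ does not contain $x_i$ while every other summand contains it to the first power; and $-n\prod_i x_i$ is linear in $x_i$. Hence, with $x_{-i}$ fixed, $f$ has the form $a_i x_i + b_i$.

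Given multilinearity, I would invoke the standard extreme-point principle: an affine function of $x_i$ on $[0,1]$ is maximized at an endpoint $x_i\in\{0,1\}$, so applying this coordinate by coordinate shows $\max_{[0,1]^n} f$ is attained at a vertex of the cube, where every $\pi_k^i\in\{0,1\}$. This already yields the ``boundary'' half of the claimed condition and reduces the problem to a finite evaluation of $f$ over the $2^n$ vertices.

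Next I would evaluate $f$ at a vertex parametrized by the set $Z$ of indices with $x_i=0$, writing $z=|Z|$. The full product $\prod_{i}x_i$ vanishes whenever $z\ge 1$, and the leave-one-out product $\prod_{j\neq i}x_j$ equals $1$ exactly when $Z\subseteq\{i\}$ and $0$ otherwise. Counting the surviving terms gives three cases: $z=0$ yields $f=1+n-n=1$; $z=1$ yields $f=1+1-0=2$, since the single surviving leave-one-out term is the one indexed by the unique zero coordinate; and $z\ge 2$ yields $f=1+0-0=1$, since no leave-one-out product survives. Therefore $f_{\mathrm{max}}=2$, attained precisely when exactly one $\pi_k^a=0$ and all remaining $\pi_k^i=1$, which is the asserted equivalence.

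The arguments are elementary; the only place demanding care — and the step I would treat as the main obstacle — is the vertex count, namely verifying that among the $n$ leave-one-out products exactly one survives when there is a single zero while none survive when there are two or more zeros. This bookkeeping is what pins the maximum to \emph{exactly} one vanishing coordinate, so I would state the condition sharply as a single $\pi_k^a=0$ together with all other probabilities equal to $1$, and flag that configurations with two or more zeros, although still on the boundary, attain only $f=1$ and hence are excluded from the maximizer set.
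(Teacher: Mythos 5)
Your proof is correct and follows essentially the same route as the paper's: both exploit the fact that $f$ is affine in each $\pi_k^i$ to push the maximizer to a vertex of $[0,1]^n$, and then enumerate vertices by the number of zero coordinates (your $z=0$, $z=1$, $z\geq 2$ cases, giving $f=1,2,1$, correspond exactly to the paper's case analysis on the coefficient $g$ and the count $N_\pi$). Your sharpening is also warranted: the paper's own computation likewise shows that vertices with two or more zeros attain only $f=1$, so the maximizer set consists precisely of the vertices with a \emph{single} zero coordinate, and the theorem's phrase ``at least one probability equals 0'' should indeed be read as ``exactly one,'' as you flag.
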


\begin{proof}
	See Appendix~\ref{proof:theo_2}.
\end{proof}

Based on Theorem~\ref{theo:cond}, a higher joint action probability will be assigned to $ \bar{Q} $ of which agents' action probabilities are on the boundary of the interval $ [0,1] $ and at least one of the agents have its probability equaling $ 0 $. This conclusion casts light on determining the approximated sampling weights for MARL tasks. To better illustrate such an idea, we introduce several special case studies with respect to the selected $ \bar{Q} $ at step $ k $.

{\em Case 1: single large value $ \bar{Q} $.} In this case, we assume only one action value $ \bar{Q} $ out of the action value function space $ \mathcal{Q} $ is large, and values $ Q_i $ of other action combinations elsewhere are negligibly small, represented by $ \nu $. These small values obey $ \nu \approx 0 $ and $ \nu \ll Q_m $. Therefore, according to Equation~\eqref{eq:pi_a} and Theorem~\ref{theo:cond}, the joint action probability for the selected action combination of $ \bar{Q} $ is lower since the action probability $ \pi_k^a $ for each agent is similarly large. In contrast, the action combinations with only one action difference (e.g., $ Q(s,\mathbf{\bar{u}}_{-a_1},u_{a_1}^i) $) will be given with high weights $ \alpha_h $. The remaining position, such as the action combinations with two or more different actions, along with $ \bar{Q} $, will be assigned with low weight $ \alpha_l $.

{\em Case 2: dual large values $ \bar{Q},\bar{Q}' $.} We propose only two large values $ \bar{Q} \approx \bar{Q}' $ under this setting. Other positions are filled with negligible value $ \nu $. $ \bar{Q}' $ is the very same as $ \bar{Q} $ except that one agent's action is different, i.e., $ Q_m=Q(s,\mathbf{\bar{u}}) $ and $ \bar{Q}'=Q(s,\mathbf{\bar{u}}_{-a_1},u_{a_1}^i) $. Since two large values equally share the importance over the action value space, based on given equations/conditions, we can drive those action combinations with only one different agent's action other than agent $ a $ will be assigned with medium weights $ \alpha_m $, e.g., $ Q(s,\mathbf{\bar{u}}_{-(a_1,a_2)},u_{a_1}^i,u_{a_2}^j) $. The positions where action combination with one action difference over agent $ a_1 $ will receive high weights $ \alpha_h $, e.g., $ Q(s,\mathbf{\bar{u}}_{-a_1},u_{a_1}^{i'}) $. Besides, we will give low weight $ \alpha_l $ for other locations.

{\em Case 3: isolated large value $ \bar{Q}'' $ and $ \bar{Q} $.} Apart from given $ \bar{Q} $, we assume an isolated large value with two or more actions different from $ \bar{Q} $, i.e., $ \bar{Q}''=Q(s,\mathbf{\bar{u}}_{-(a_1,a_2)},u_{a_1}^i,u_{a_2}^j) $, which exists somewhere in the action value function space, and satisfies $ \bar{Q} \approx \bar{Q}'' $. Other action values are $ \nu $. In this situation, both two large values share the same importance, and we will assign the medium weight $ \alpha_m $ to $ \bar{Q} $, $ \bar{Q}'' $, and the action combinations having one action different over agent $ a_1 $ or $ a_2 $, such as $ Q(s,\mathbf{\bar{u}}_{-(a_1,a_2)},u_{a_1}^{i'},u_{a_2}^j) $. The rest of the action combination values will be allocated with low weight $ \alpha_l $. This special case demonstrates that if one or more action values are extraordinarily large, indicating another joint policy candidate with latent high joint action probability, we should also heed such equivalent competitor and its local search.

We can establish the approximation structure by studying from mentioned special cases. The scaled weights $ \alpha_l $, $ \alpha_m $, and $ \alpha_h $ provide an alternative solution that spares us from directly using the numerically computed sampling weights to solve the latent computational cost, yet the performance remains mainly impervious.


\subsection{Proposed Algorithms}

Our analytical results in Theorem~\ref{theo:weight} identify four key factors determining the optimal projection weights. The first term, relating to the Bellman error, recovers the designs in classic prioritized experience replay. Specifically, when the Bellman error of a particular transition is high, which indicates a wide hindsight gap between $ Q_k $ and the Bellman target, we may consider assigning a larger weight to this transition. Besides, the value enhancement term selectively emphasizes the importance of incoming transitions. Based on the difference between current $ Q_k $ and ideal $ Q^* $, it will compensate the near-optimal $ Q_k $ with larger importance while penalizing non-optimal $ Q_k $ with a smaller weighting modifier. Moreover, similar to previous studies, the measurement of on-policy transitions in the weighting expression underlines the useful information carried by more current, on-policy transitions. 

Our analysis also identifies a new term reflecting the interaction among agents in the MARL scenario: the joint action probability of multiple agents, which is crucial in obtaining optimal sampling weights for specific transitions. We interpret the joint action probability term in optimal weights constrained by the given condition: one agent's action probability is small in the transition, while all other agents' action probabilities are large. We increase the weights for transitions satisfying this condition. On the contrary, we decrease the weight if the condition fails to be satisfied.

Following these theoretical results, we propose a MARL algorithm for collective priority optimization, \name, with regret-minimizing joint policy in multi-agent environments. We consider a new loss function with respect to the optimal sampling weights $ w_k $ applied to the Bellman equation of, i.e.,
\begin{equation}
	L_{\rm \name} = \sum_{i=1}^{b}w_k(s,\mathbf{u})(Q_k-y_i)^2(s,\mathbf{u}),
\end{equation}
where $ b $ is the batch size, and $ y_i=\mathcal{B}^*Q_{k-1} $ is a fixed target that can be obtained through a target network.

The Bellman error and joint action probability of all agents in the environment can be directly computed using Theorem~\ref{theo:weight}. To compute the sampling weights for value enhancement term in practice, we use the backbone of the classical value factorization MARL algorithm QMIX and leverage the unrestricted joint action-value function $ Q^* $ to compute the approximated optimal action-value function quantitatively. Ideally, we could have included measurement of on-policy transitions term in the computation, but it is not readily available since the distribution $ d^{\bpi_k}(s,\bu) $ in the numerator cannot be directly obtained. It is also worth mentioning that such term can be dismissed and the other terms in the weight expression are enough to provide a good estimate and lead to performance improvements, as shown in existing work~\cite{kumar2020discor}. 
Furthermore, based on our previous discussion, we designed an approximated counterpart, \name Approximation, by setting the threshold and scaling sampling weights values into low, medium, and high ones. The pseudo-codes are provided in Appendix~\ref{subsec:alg}.

\section{Experiments}
In this section, we present our experimental results on Predator-Prey and SMAC benchmarks and demonstrate the effectiveness of \name by comparing the results with several state-of-the-art MARL baselines. Additionally, we compare \name and \name Approximation with other experience replay methods adapted from single-agent RL to multi-agent environments. Each comparison is implemented independently with fixed and optimized~\cite{mei2022bayesian} hyperparameters. We also conduct the ablation experiments to discuss the contribution of each term mentioned in Theorem~\ref{theo:weight}. More implementation details are provided in Appendix~\ref{subsec:setup}. The code
has been made available at: \url{https://github.com/ysmei97/MAC-PO}.

\begin{figure*}[ht!]
	\centering
	\begin{subfigure}[ht]{0.33\textwidth}
		\centering
		\includegraphics[width=\textwidth]{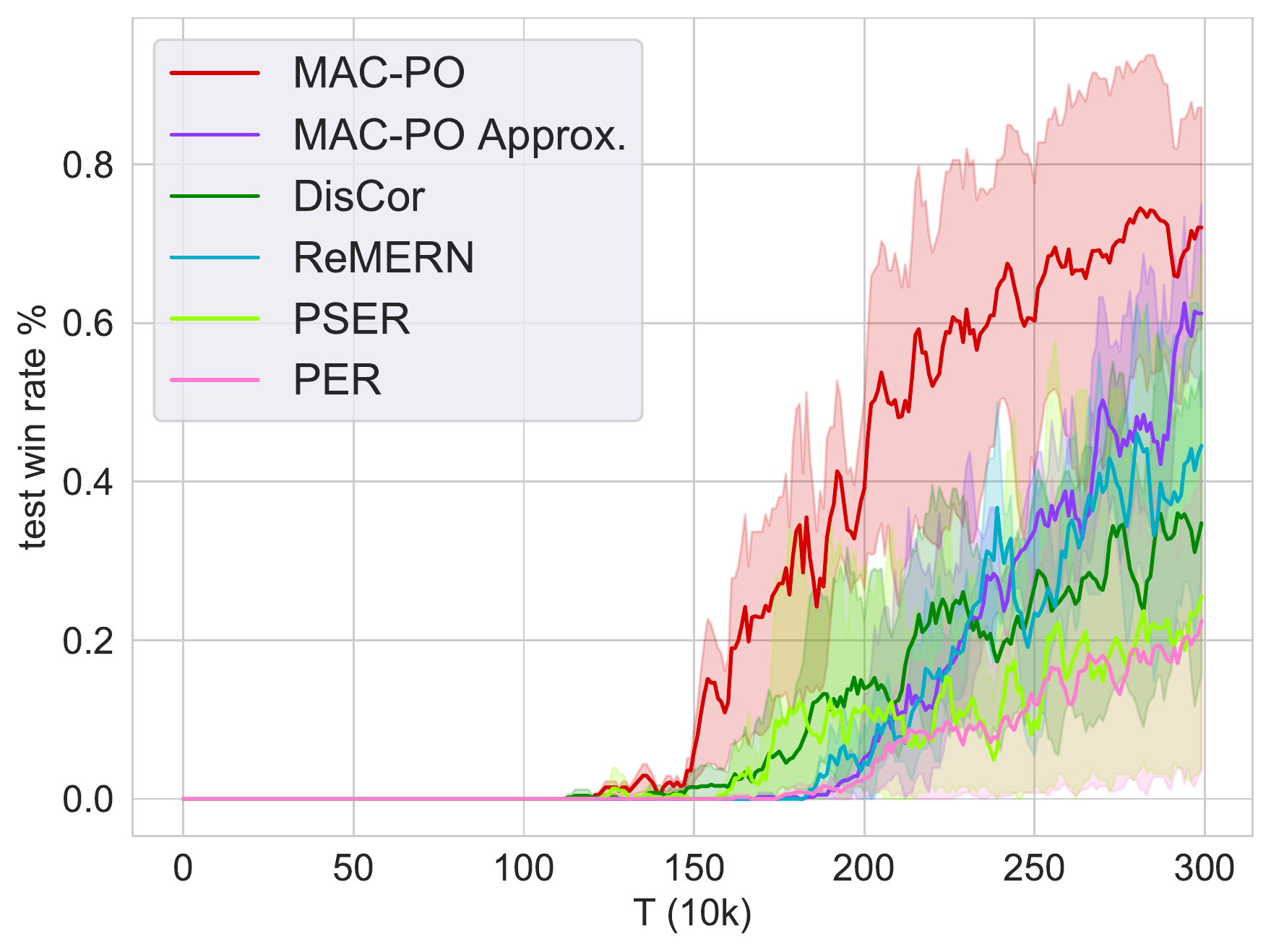}
		\caption{\small 3s\_vs\_5z (hard)}
	\end{subfigure}
	\begin{subfigure}[ht]{0.33\textwidth}
		\centering
		\includegraphics[width=\textwidth]{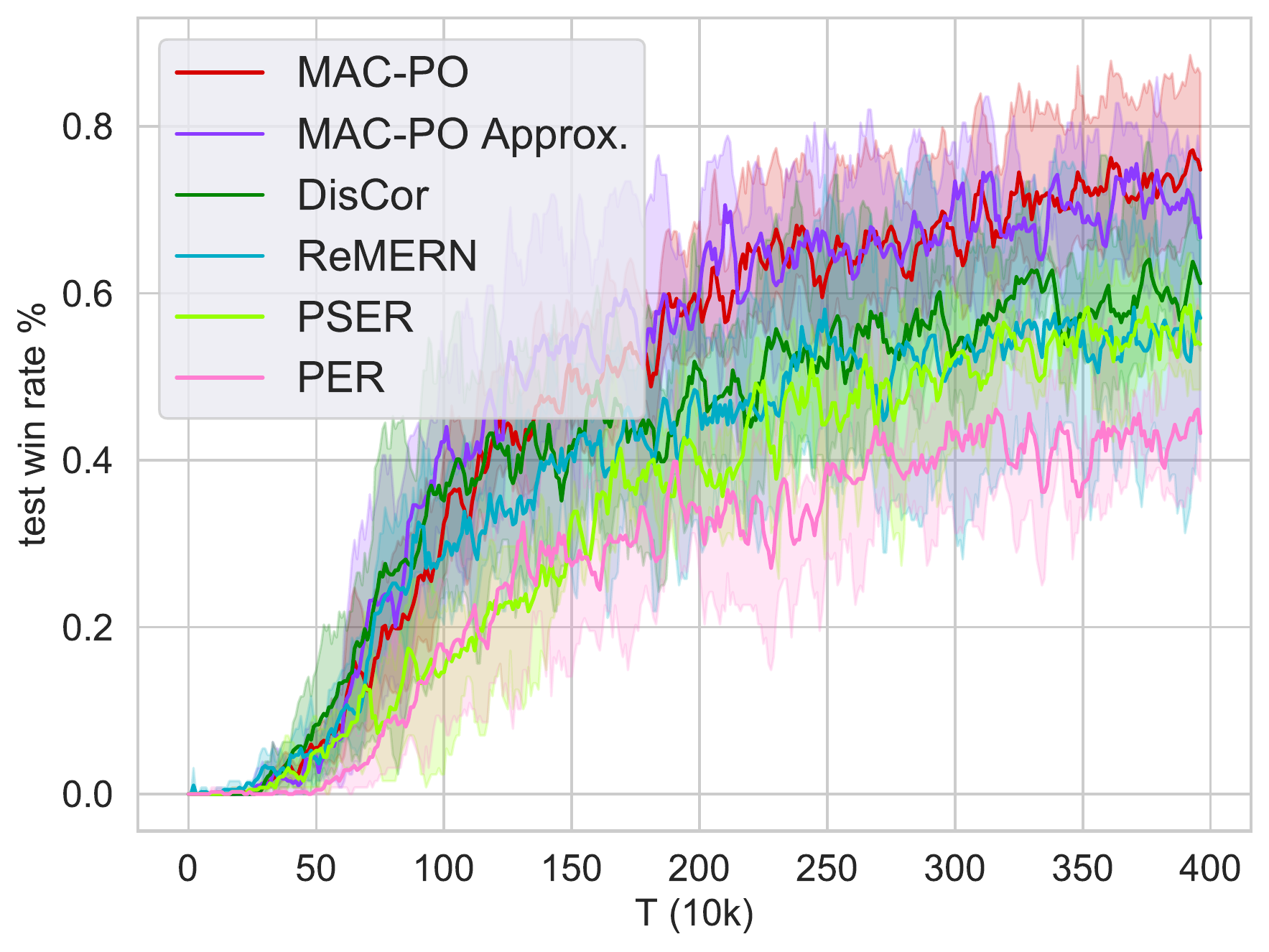}
		\caption{\small 5m\_vs\_6m (super hard)}
	\end{subfigure}
	\begin{subfigure}[ht]{0.33\textwidth}
		\centering
		\includegraphics[width=\textwidth]{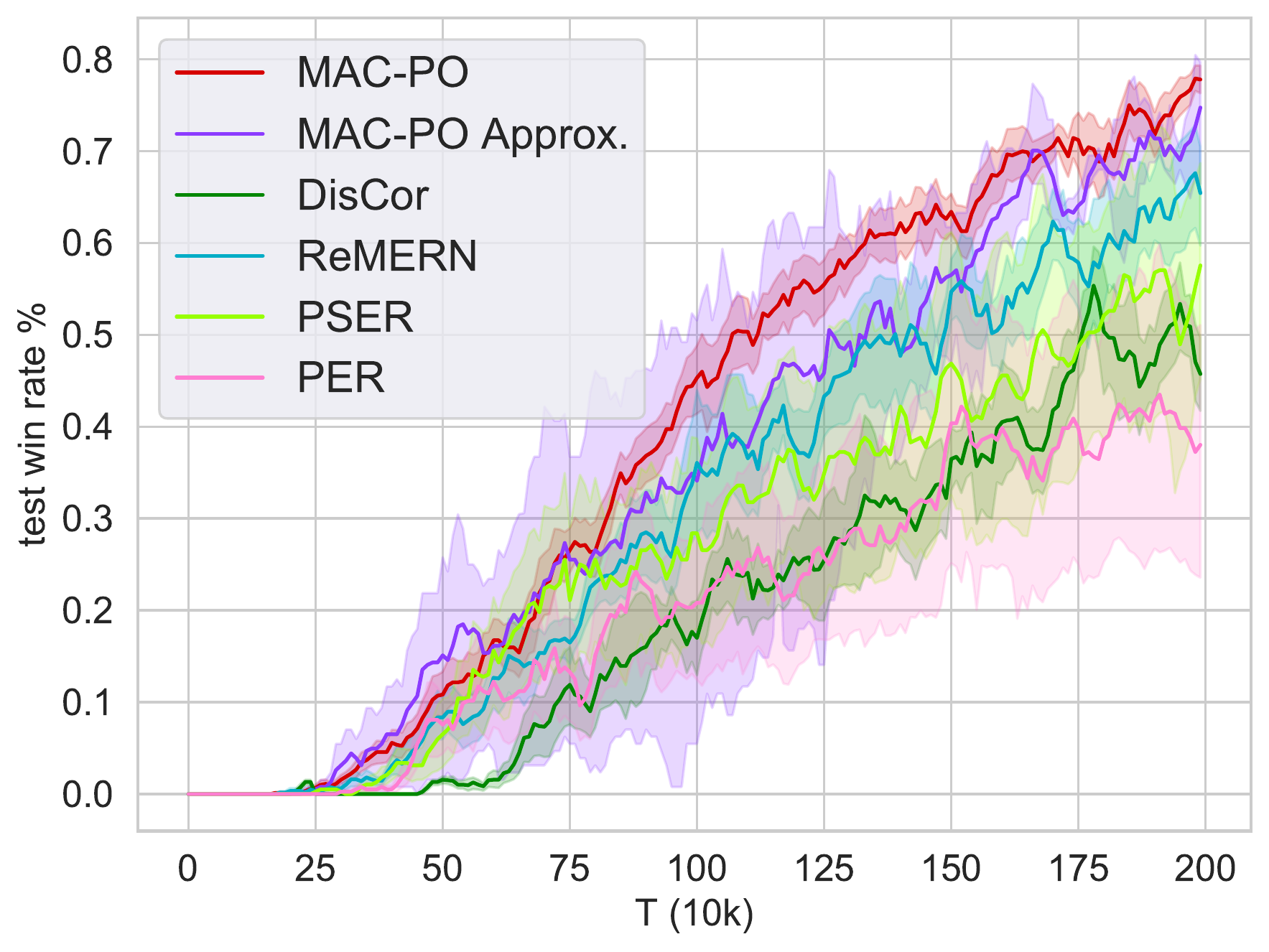}
		\caption{\small MMM2 (super hard)}
	\end{subfigure}
	\caption{Comparison between \name, \name Approximation and other experience replay methods on three SMAC maps (from hard to super hard), where \name outperforms the second best one -- \name Approximation by 10\%, 6\%, and 4\% on each map, respectively.}
	\label{fig:cmp}
\end{figure*}

\begin{figure*}[ht!]
	\centering
	\begin{subfigure}[ht]{0.48\textwidth}
		\centering
		\includegraphics[width=\textwidth]{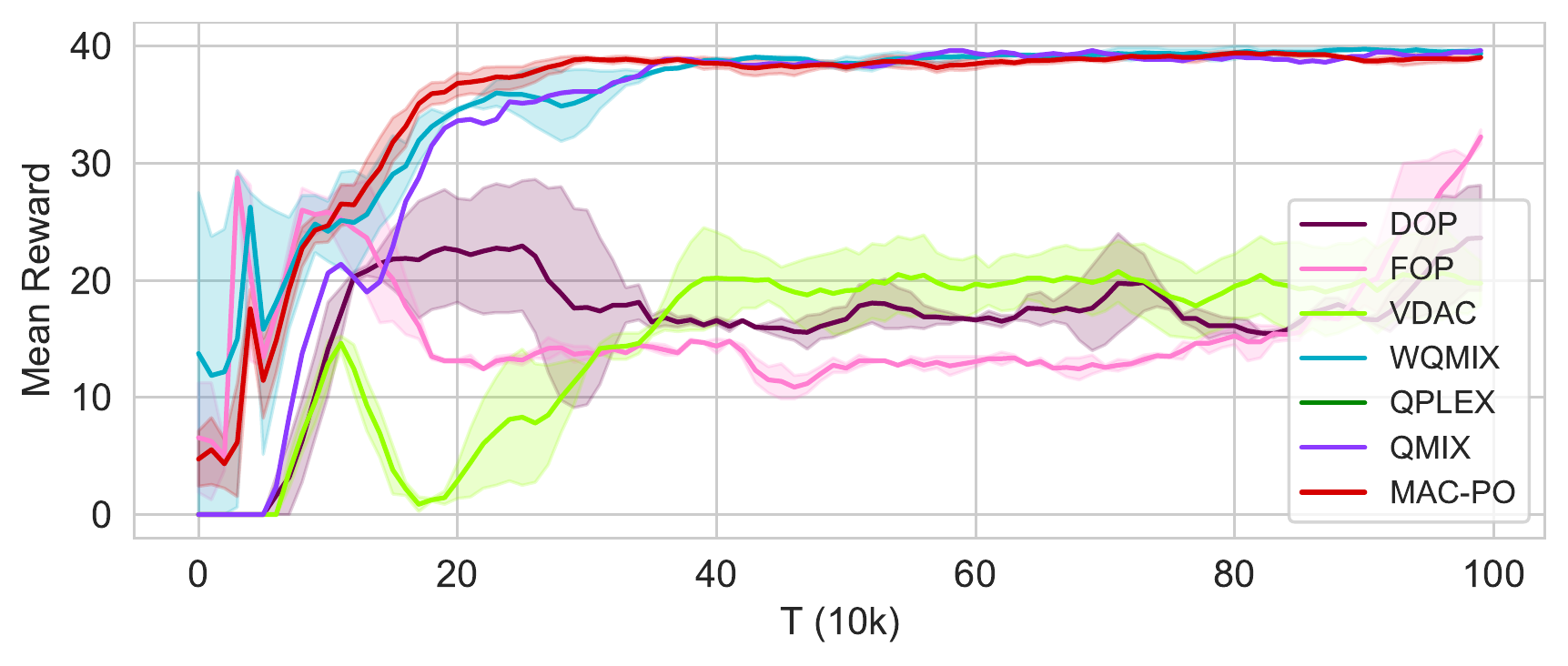}
		\caption{\small No punishment}
		\label{fig:pp_0}
	\end{subfigure}
	\begin{subfigure}[ht]{0.50\textwidth}
		\centering
		\includegraphics[width=\textwidth]{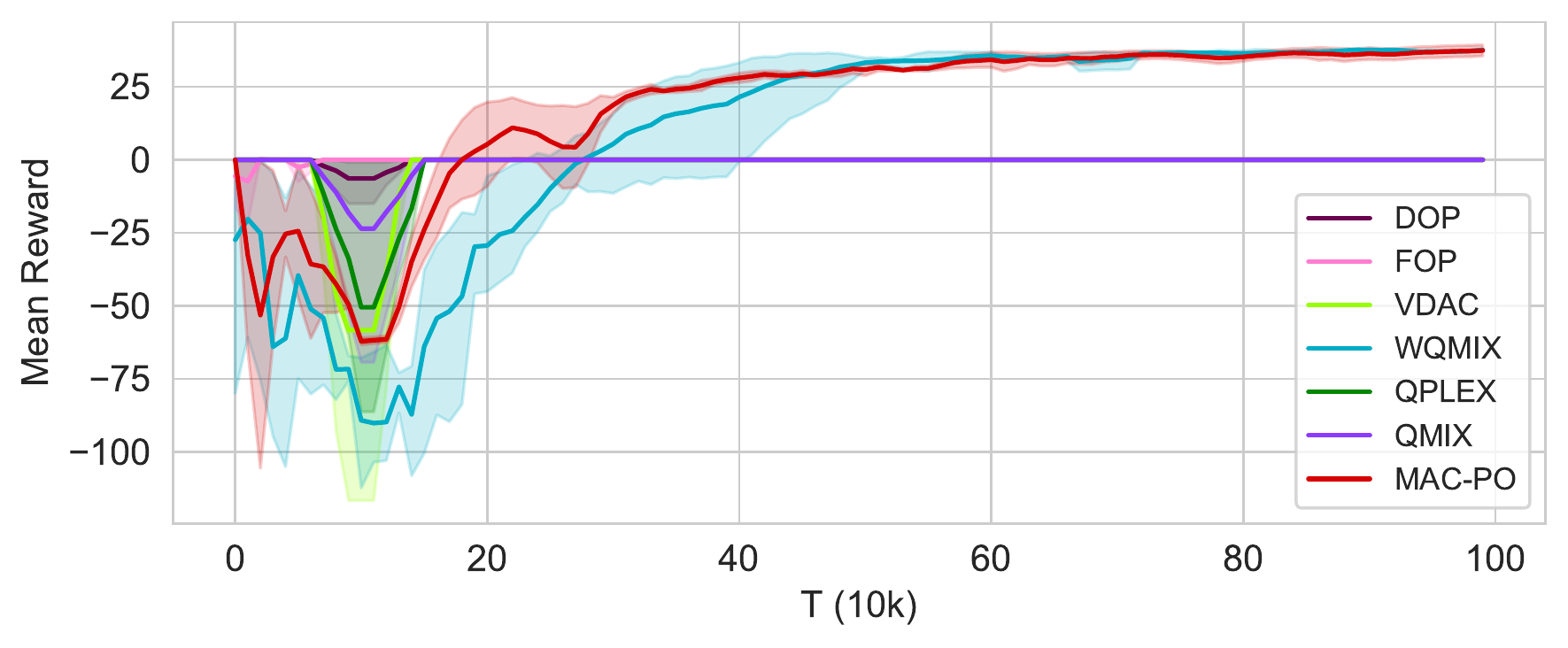}
		\caption{\small Punishment $ = - $1.5}
		\label{fig:pp_1.5}
	\end{subfigure}
	\caption{Average reward per episode on the Predator-Prey tasks for \name and other MARL algorithms of two settings, where \name shows the profoundly better convergence speed and pretty good results.}
	\label{fig:pp}
\end{figure*}

\subsection{Comparison with Existing Experience Replay Methods}

In this experiment, we compare \name with other experience replay methods in the multi-agent environment SMAC. Since existing experience replay methods are designed for the single-agent scenario, we borrow their core designs and transplant them to multi-agent environments by considering all agents as the conceptual agent to match the single-agent target in their original settings. Such transplanting will recover the most important ingredients from RL to MARL to the greatest extent. It is worth mentioning that many other algorithms are also introducing a variety of experience replay schemes. Some of them~\cite{oh2021model, sujit2022prioritizing} depend on new components, and others~\cite{saglam2022actor} have different algorithm architectures. Since the backbone MARL algorithm of our choice in this experiment is QMIX, we do not expect a significant change over the algorithm architecture (e.g., actor-network) or major components (e.g., loss structure) as presented in other approaches to realize a relatively fair comparison.

The first approach for comparing is PER~\cite{schaul2016prioritized}. Due to the equivalence between loss functions and non-uniform sampling for experience replay~\cite{fujimoto2020equivalence}, we reconstruct PER scheme by computing weights only related to the current TD error regarding the joint action-value function. We also compare our method with the one mentioned in DisCor~\cite{kumar2020discor}, where the weights are calculated from the production of Bellman error and value enhancement terms, and ReMERN~\cite{liu2021regret}, which has an additional term describing action likelihood, and we extend it to the multi-agent case. Besides, we transplanted the mechanism from PSER~\cite{brittain2019prioritized}, which is another extension of PER, by applying an additional decay factor and window size on the weights for coming transitions. For this experiment, we set the decay factor as 0.4 and the window size as 5.

Figure~\ref{fig:cmp} shows the performance comparison among \name and other experience replay algorithms on three maps of SMAC benchmark, which are \textit{3s\_vs\_5z}, \textit{5m\_vs\_6m}, and \textit{MMM2}. Compared to \name, other experience replay schemes underperform in improving learning performance. DisCor and ReMERN have higher final winning rates than the regular PER, demonstrating the effectiveness of additional terms. PSER also acts better than PER owing to its decaying mechanism for selecting history transitions. All transplanted algorithms from single-agent scenario act unstably in the multi-agent environment, as we can notice the variance reflected by the shaded area in Figure~\ref{fig:cmp}.

Besides, we also test our approximated sampling weight approach, shown as \name Approximation in Figure~\ref{fig:cmp}. We set the higher weight $ \alpha_h $ as 0.75, medium weight $ \alpha_m $ as 0.5, and lower weight $ \alpha_l $ as 0.25. The final result is almost identical to the original \name with small nuance. For the original \name, the computational complexity of obtaining sampling weights will increase if more agents get involved. Since the approximated \name uses scaled weights instead of numerical results, it will improve the computational efficiency of the original \name at the price of slightly sacrificing the overall performance.

\begin{figure*}[ht!]
	\centering
	\begin{subfigure}[t]{0.33\textwidth}
		\centering
		\includegraphics[width=\textwidth]{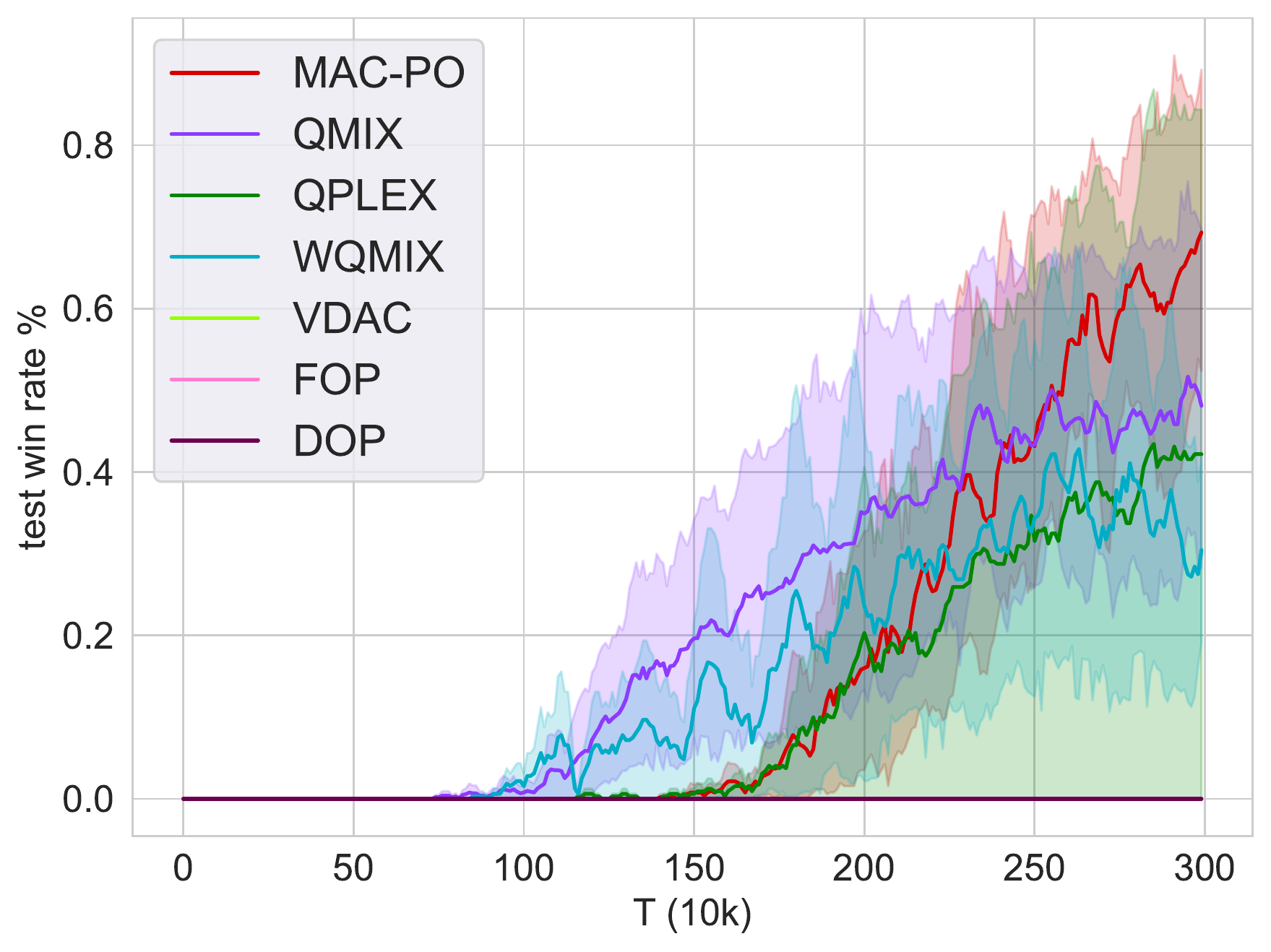}
		\caption{\small 3s\_vs\_5z (hard)}
		\label{fig:3svs5z}
	\end{subfigure}
	\begin{subfigure}[t]{0.33\textwidth}
		\centering
		\includegraphics[width=\textwidth]{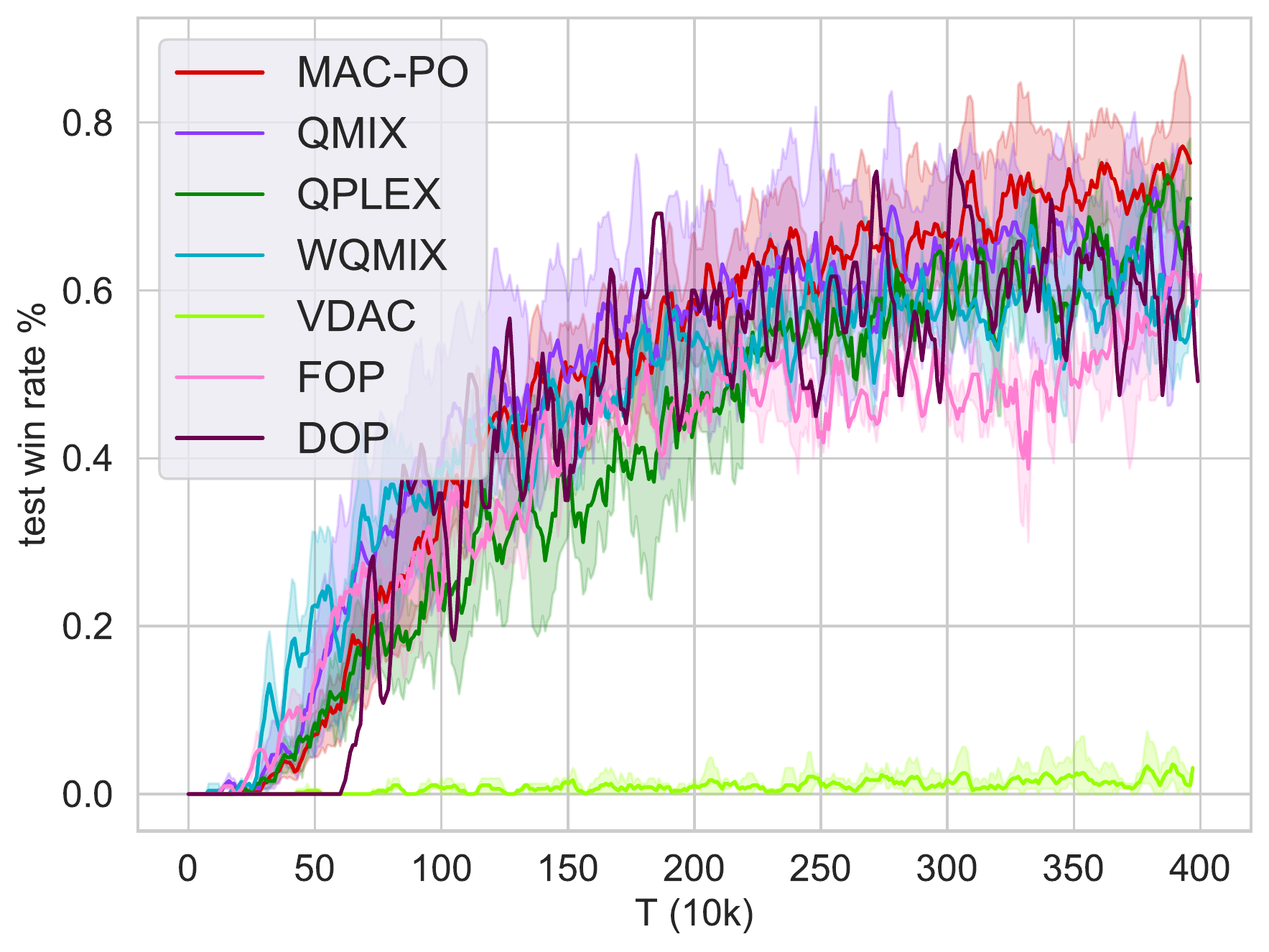}
		\caption{\small 5m\_vs\_6m (hard)}
		\label{fig:5mvs6m}
	\end{subfigure}
	\begin{subfigure}[t]{0.33\textwidth}
		\centering
		\includegraphics[width=\textwidth]{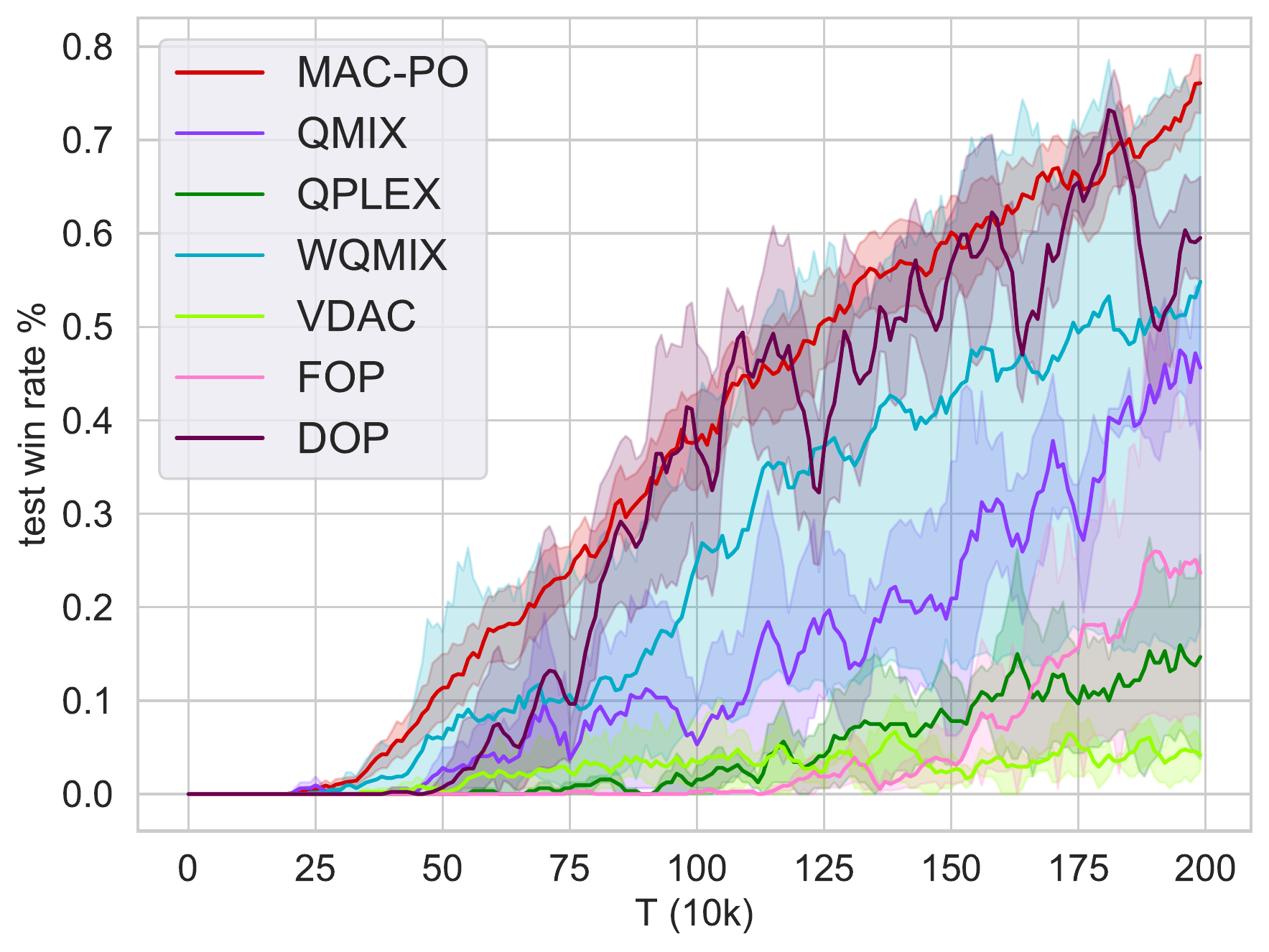}
		\caption{\small MMM2 (super hard)}
		\label{fig:mmm2}
	\end{subfigure}
	\caption{Comparison between \name and other MARL algorithms on three SMAC maps (from hard to super hard). \name achieves the best results with the optimal weighting scheme alone, outperforming the second best result by 11\%, 4\%, and 16\% on each map, respectively. }
	\label{fig:smac}
\end{figure*}

\subsection{Comparison with MARL Algorithms}

\subsubsection{Predator-Prey}

We compare \name with MARL algorithms on a complex partially-observable multi-agent cooperative environment, Predator-Prey, that involves eight agents in cooperation as predators to catch eight prey on a 10$ \times $10 grid. In this task, a successful capture with the positive reward of 1 must include two or more predator agents surrounding and catching the same prey simultaneously, requiring a high level of cooperation. A failed coordination between agents to capture the prey, which happens when only one predator catches the prey, will receive a negative punishment reward. We select multiple state-of-the-art MARL algorithms for comparison, which include value-based factorization MARL algorithm (i.e., QMIX, WQMIX~\cite{rashid2020weighted}, and QPLEX), decomposed policy gradient method (i.e., VDAC~\cite{su2021value}), and decomposed actor-critic approaches (i.e., FOP~\cite{zhang2021fop} and DOP~\cite{wang2020dop}). All mentioned baselines have shown strength in handling MARL tasks in existing works.

Figure~\ref{fig:pp} shows the performance of seven algorithms with different punishments, where all results show the effectiveness of \name. Besides, regarding efficiency, we can spot that \name has the fastest convergence speed in seeking the best policy. In Figures~\ref{fig:pp_1.5}, \name significantly outperforms other state-of-the-art algorithms in a hard setting requiring a higher level of coordination among agents as learning the best policy. Most MARL algorithms learn a sub-optimal policy where agents learn to work together with limited coordination. Although the performance of \name and WQMIX are similar, compared to the latter, \name converges to the optimal policy profoundly faster, demonstrating that our multi-agent optimal weighting scheme can efficiently learn from specific existing transitions.

\subsubsection{SMAC}
Next, we evaluate \name on the SMAC benchmark. We report the experiments on three maps consisting of two hard maps and one super-hard map. The selected baselines for this experiment are consistent with those in the Predator-Prey environment. The empirical results are provided in Figure~\ref{fig:smac}, demonstrating that \name can effectively generate optimal weight transitions on SMAC for achieving a higher win rate, especially when the environment becomes substantially complicated and harder, such as \textit{MMM2}. We can see that several state-of-the-art algorithms are brittle when significant exploration is undergoing without finding optimal sampling weights.

Specifically, \name performs well on hard maps, such as \textit{3s\_vs\_5z}, the best policy found by our optimal weighting approach significantly outperforms the remaining baseline algorithms regarding winning rate. For super-hard map \textit{MMM2}, \name, along with QMIX, WQMIX, and QPLEX, can learn a better policy than VDAC, DOP, and FOP. We achieve the highest winning rate by adopting our algorithm on \textit{MMM2}, showing the superiority of the optimal weighting scheme in utilizing past transitions.

\begin{figure}[t]
	\centering
	\includegraphics[width=0.84\linewidth]{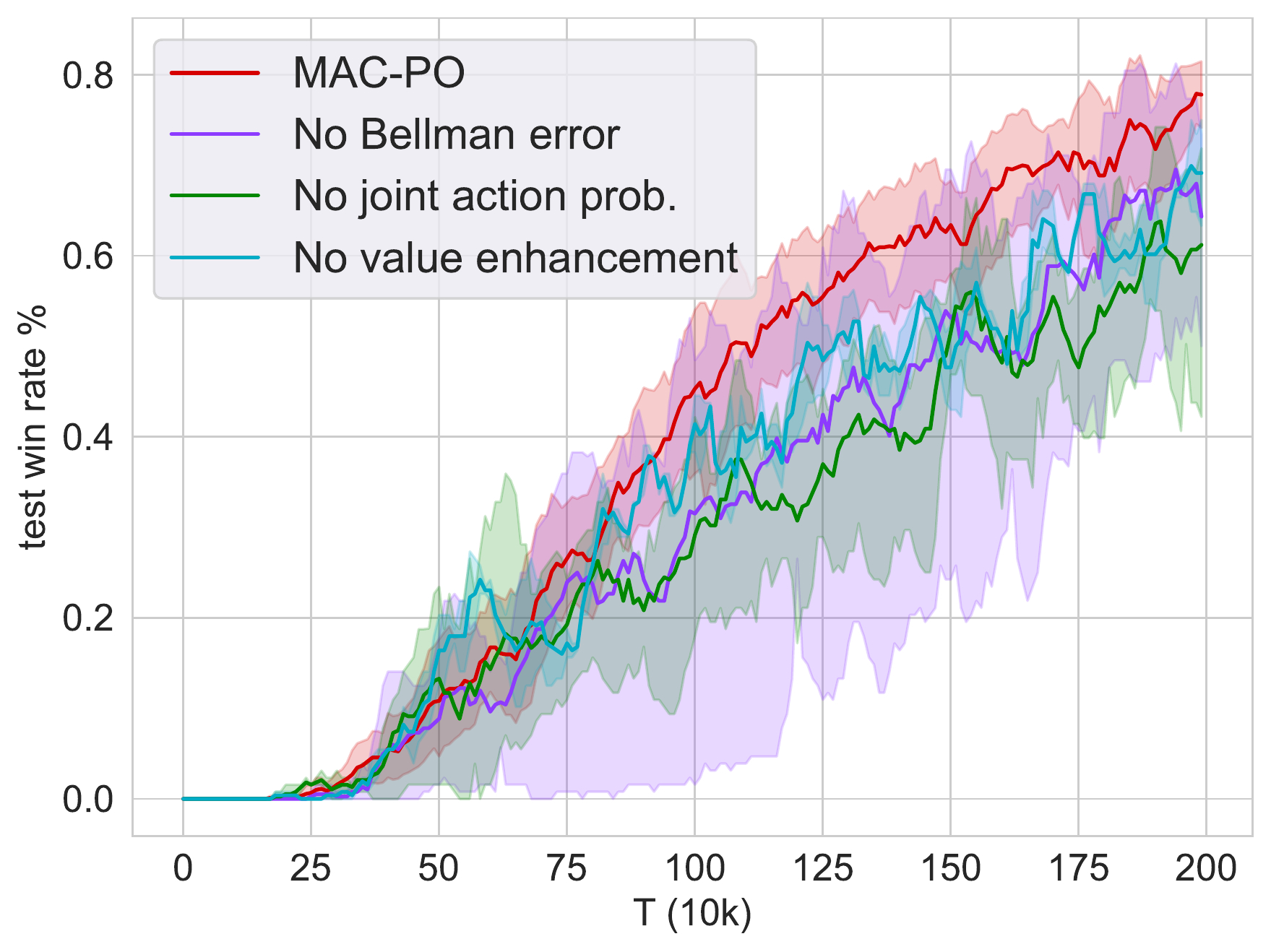}
	\caption{Ablations by disabling one term each for \name on MMM2 (super hard). The final winning rates decrease by 18\% for disabling the joint action probability term, 15\% for disabling the Bellman error term, and 10\% for disabling the value enhancement term.}
	\label{fig:ablation}
\end{figure}

\subsection{Ablation Experiments}

For ablations, we conduct experiments by disabling one term (mentioned in Theorem~\ref{theo:weight}) every trial to investigate their contribution to finding optimal sampling weights, respectively. The terms considered in these experiments are Bellman error, value enhancement, and joint action probability. Figure~\ref{fig:ablation} shows the results on \textit{MMM2}. Compared to the original result, missing any of the terms will be detrimental to the performance, and the tests without joint action probability have the lowest final winning rate, which is around 60\%. Such a phenomenon demonstrates that the interaction among agents is the critical factor in MARL tasks. The designing of the optimal weighting scheme without taking joint action probability into account will be less capable of achieving ideal results. Furthermore, the contributions of Bellman error and value enhancement terms are similar according to the given trend in Figure~\ref{fig:ablation}.

\section{Conclusion}

In this paper, we formulate multi-agent experience replay as a regret minimization problem and solve the optimal sampling weights in close form. The theoretical results illustrate key ingredients for an optimal experience replay in MARL settings. The results enable us to propose \name (with both exact and approximated weights) as a new MARL experience replay algorithm with optimized experience replay weights. Our experiment results in multiple MARL environments show the effectiveness of \name by demonstrating superior convergence and empirical performance over other experience replay solutions (adapted from single-agent RL) as well as state-of-the-art MARL methods.



\begin{acks}
	This research is based on work supported by the National Science Foundation under grant CCF-2114415 and partially by research gifts from CISCO and Meta. 
\end{acks}



\bibliographystyle{ACM-Reference-Format} 
\bibliography{reference}

\onecolumn
\appendix


\section{Nomenclature}
\label{subsec:notation}
Table~\ref{tab:notation} summarizes the common notations used in this paper.

\begin{table}[h]
	\centering
	\caption{Definitions of the common notations.}
	\label{tab:notation}
	\begin{tabular}{lc}
		\toprule
		Notation & Definition \\
		\midrule
		$ s $ & State of the environment \\
		$ a $ & Agent \\
		$ u $ & Agent's individual action \\
		$ \bu $ & Agents' joint action \\
		$ r $ & Reward \\
		$ \gamma $ & Discount factor \\
		$ \tau $ & Action-observation history \\
		$ \bpi $ & Joint policy \\
		$ \bpi^* $ & Expected optimal joint policy \\
		$ \eta(\pi) $ & Expected return under the joint policy $ \bpi $ \\
		$ d^{\bpi}(s) $ & Discounted state distribution \\
		$ Q(\cdot) $ & Action value function \\
		$ Q^*(\cdot) $ & Optimal action value function \\
		$ V(\cdot) $ & Value function \\
		$ V^*(\cdot) $ & Optimal value function \\
		$ A(\cdot) $ & Advantage function \\
		$ L(\cdot) $ & Loss function \\
		$ \mathcal{B}^*$ & Bellman operator: $ \mathcal{B}^*Q(s,\bu) \eqdef r(s,\bu) + \gamma\arg\max_{\bu'}\mathbb{E}_{s'}Q(s',\bu') $ \\
		$ w $ & Sampling weight \\
		$ \alpha $ & Scaled sampling weight \\
		\bottomrule
	\end{tabular}
\end{table}

\section{Proof of Theorem~\ref{theo:weight}}
\label{proof:theo_1}

We have provided the outline of the proof including four key steps. In this section, we present the detailed proof of the theorem. The optimization problem needed solving is:
\begin{equation*}
	\begin{aligned}
		\min_{w_k} \quad &\eta(\bpi^*)-\eta(\bpi_k) \\
		\textrm{s.t.} \quad &Q_k=\arg\min_{Q \in \mathcal{Q}} \mathbb{E}_{\mu}[w_k(s,\mathbf{u})(Q-\mathcal{B}^*Q_{k-1})^2(s,\mathbf{u})], \\
		&\mathbb{E}_{\mu}[w_k(s,\mathbf{u})]=1, \quad w_k(s,\mathbf{u}) \ge 0,
	\end{aligned}
\end{equation*}

This problem is equivalent to:
\begin{equation}
	\begin{aligned}
		\min_{p_k} \quad &\eta(\bpi^*)-\eta(\bpi_k) \\
		\textrm{s.t.} \quad &Q_k=\arg\min_{Q \in \mathcal{Q}} \mathbb{E}_{p_k}[(Q-\mathcal{B}^*Q_{k-1})^2(s,\bu)], \\
		&\sum_{s,\bu}p_k(s,\bu)=1, \quad p_k(s,\bu) \ge 0,
	\end{aligned}
	\label{eq:ap_obj_1}
\end{equation}
where $ p_k=w_k(s,\bu)\mu(s,\bu) $ is the solution to problem~\eqref{eq:ap_obj_1}.

To solve the optimization problem in Equation~\eqref{eq:ap_obj_1}, we needed to provide some definitions, which are \textit{total variation distance}, \textit{Wasserstein metric}, and \textit{the diameter of a set}.
\begin{definition}[Total variation distance]
	The total variation distance of the distribution P and Q is defined as $ D(P,Q)=\frac{1}{2} \Vert P-Q \Vert $.
\end{definition}
\begin{definition}[Wasserstein metric]
	For F,G two cumulative distribution function over the reals, the Wasserstein metric is defined as $ d_p(F,G) \eqdef \inf_{U,V} \Vert  U-V \Vert_p $, where the infimum is taken over all pairs of random variables (U,V) with cumulative distributions F and G, respectively.
\end{definition}
\begin{definition}[Diameter of a set]
	The diameter of a set A is defined as $ \mathrm{diam}(A)=\sup_{x,y \in A} m(x,y) $, where m is the metric on A.
\end{definition}

Furthermore, we introduce some mild assumption as follows:
\begin{assumption}
	The state space $ S $, action space $ U $ and observation space $ Z $ are compact metric spaces.
	\label{assum:metric}
\end{assumption}
\begin{assumption}
	The action-value and observation function are continuous on $ S \times U $ and $ Z $, respectively.
	\label{assum:continuous}
\end{assumption}
\begin{assumption}
	The transition function T is continuous regarding $ S \times U $ in the sense of Wasserstein metric: $ \lim_{(s,\bu)\rightarrow(s_0,\bu_0)} d_p(T(\cdot|s,\bu),T(\cdot|s_0,\bu_0)) $.
\end{assumption}
\begin{assumption}
	The joint policy $ \bpi $ is the product of each agent’s individual policy $ \pi^a $
	\label{assum:policy}
\end{assumption}
These assumptions can be satisfied in most MARL environments.


Let $ d^{\pi^a}(s) $ denote the discounted state distribution of agent $ a $, and $ d_i^{\pi^a}(s) $ denote the distribution where the state is visited by the agent for the $ i $-th time. Thus, we have:
\begin{equation}
	d^{\pi^a}(s) = \sum^{\infty}_{i=1} d^{\pi^a}_i(s),
\end{equation}
where each $ d^{\pi^a}_i(s) $ is given by:
\begin{equation}
	d^{\pi^a}_i(s)= (1-\gamma)\sum^\infty_{t_i=0}\gamma^{t_i}\Pr(s_{t_i}=s, s_{t_k}=s, \forall k=1,...,i-1),
\end{equation}
where the $ \Pr(s_{t_i}=s, s_{t_k}=s, \forall k=1,...,i-1) $ in this equation contains the probability of visiting state $ s $ for the $ i $-th time at $ t_i $ and a sequence of times $ t_k $, for $ k=1,..., i $, such that state $ s $ is visited at each $ t_k $. Thus, state $ s $ will be visited for $ i $ times at time $ t_i $ in total.

The following lemmas are proposed by Liu~\cite{liu2021regret}, where Lemma~\ref{lem:tvd} support the derivation of the Lemma~\ref{lem:d_pi}, and Lemma~\ref{lem:d_pi} demonstrates that $ \left| \frac{\p d^{\pi^a}(s)}{\p \pi^a(s)} \right| $ is a small quantity.
\begin{lemma}
	Let $ f $ be an Lebesgue integrable function. P and Q are two probability distributions, $ f \leq C $, then:
	\begin{equation}
		|\mathbb{E}_{P(x)}f(x)-\mathbb{E}_{Q(x)}f(x)| \le C\cdot D(P,Q).
	\end{equation}
	\label{lem:tvd}
\end{lemma}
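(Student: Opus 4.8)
The plan is to bound the difference of expectations by writing it as a single integral against the signed measure $P-Q$ and then exploiting the fact that $P$ and $Q$ are \emph{both} probability distributions. First I would observe that since $\int dP = \int dQ = 1$, the signed measure has zero total mass, $\int d(P-Q) = 0$. Consequently, subtracting any constant $c$ from the integrand leaves the quantity of interest unchanged: for every $c$ we have $\mathbb{E}_{P}f - \mathbb{E}_{Q}f = \int f\, d(P-Q) = \int (f-c)\, d(P-Q)$. This freedom to recenter $f$ is the whole point of the argument and is available only because the two measures share the same total mass.

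Next I would apply the triangle inequality for integrals and pull out the sup-norm of the recentered integrand: $\left| \int (f-c)\, d(P-Q) \right| \le \int |f-c|\, d|P-Q| \le \|f-c\|_\infty \int d|P-Q|$. By the definition of total variation distance given above, $D(P,Q) = \tfrac12 \|P-Q\|$, so that $\int d|P-Q| = \|P-Q\| = 2\,D(P,Q)$. Combining these gives the uniform estimate $\left| \mathbb{E}_{P}f - \mathbb{E}_{Q}f \right| \le 2\,\|f-c\|_\infty\, D(P,Q)$ for every admissible constant $c$.

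The only delicate point is the choice of $c$, which controls the constant in front of $D(P,Q)$. With the naive choice $c=0$ one merely recovers the weaker bound $2C\cdot D(P,Q)$; to obtain the sharp coefficient $C$ stated in the lemma I would center $f$ at its midrange. Reading the hypothesis as a two-sided bound $0 \le f \le C$, I set $c = C/2$, so that $\|f-c\|_\infty \le C/2$, and the estimate collapses to $\left| \mathbb{E}_{P}f - \mathbb{E}_{Q}f \right| \le 2\cdot \tfrac{C}{2}\cdot D(P,Q) = C\cdot D(P,Q)$, exactly as claimed. I expect the main obstacle to be this constant-matching rather than the inequality itself: the bound is immediate from the variational definition of total variation, but the factor-of-two improvement from $2C$ to $C$ hinges entirely on the mean-subtraction trick, which in turn relies on $P$ and $Q$ being genuine probability measures. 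I would also flag that the oscillation of $f$ (not merely the one-sided bound $f\le C$) is what is actually needed, since under a purely one-sided bound with $f$ unbounded below the difference of expectations need not even be finite.
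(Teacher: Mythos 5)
Your proof is correct, but note that the paper itself never proves this lemma: it is imported verbatim from Liu et al.\ \cite{liu2021regret} with only a citation, so there is no in-paper argument to compare against. Your route is the standard one — write the difference of expectations as $\int f\,d(P-Q)$, exploit the zero total mass of $P-Q$ to recenter $f$ by an arbitrary constant, and bound by $\lVert f-c\rVert_\infty \int d|P-Q| = 2\lVert f-c\rVert_\infty D(P,Q)$ — and it is carried out cleanly. Your most valuable observation is the one you flag at the end: with only the one-sided hypothesis $f\le C$ as literally stated, the claim is false (take a two-point space with $f(1)=C$, $f(2)=-M$, $P=(1,0)$, $Q=(1-\epsilon,\epsilon)$; then $D(P,Q)=\epsilon$ but the gap in expectations is $\epsilon(C+M)$, which exceeds $C\epsilon$ for any $M>0$). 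The constant $C$ in the conclusion really requires the oscillation $\sup f - \inf f \le C$, which your reading $0\le f\le C$ with the midrange choice $c=C/2$ supplies. This reading is also the one consistent with how the lemma is used downstream (in the derivation of Lemma~\ref{lem:d_pi}, the integrands are built from probabilities and discounted visitation terms, hence non-negative), so your interpretation both repairs the statement and matches its application.
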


\begin{lemma}
	Let $ \rho $ be the probability of the agent $ a $ starting from $ (s,u^a) $ and coming back to $ s $ at time step $ t $ under policy $ \pi^a $, i.e. $ \Pr(s_0=s,u^a_0=u^a,s_t=s,s_{1:t-1} \neq s; \pi^a) $, and $ \epsilon = \sup_{s,u^a}\sum_{t=1}^{\infty}\gamma^t\rho^{\pi^a}(s,u^a,t)$. We have:
	\begin{equation}
		\left| \frac{\p d^{\pi^a}(s)}{\p \pi^a(s)} \right| \le \epsilon d_1^{\pi^a}(s),
	\end{equation}
	where $ d^{\pi^a}_1(s)=(1-\gamma)\sum^\infty_{t_1=0}\gamma^{t_1}\Pr(s_{t_1}=s) $ and $ \epsilon \leq 1 $.
	\label{lem:d_pi}
\end{lemma}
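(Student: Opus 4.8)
The plan is to exploit the first-visit / revisit decomposition $d^{\pi^a}(s)=\sum_{i=1}^{\infty} d_i^{\pi^a}(s)$ recorded above, and to show that the only dependence of $d^{\pi^a}(s)$ on the action distribution $\pi^a(\cdot|s)$ enters through the probability of returning to $s$. First I would argue that the first-visit term $d_1^{\pi^a}(s)$ is independent of $\pi^a(\cdot|s)$: reaching $s$ for the first time at some time $t_1$ means $s_{0:t_1-1}\neq s$, so no action is ever taken at $s$ along such a trajectory, and hence $\p d_1^{\pi^a}(s)/\p\pi^a(s)=0$. Consequently the entire derivative $\p d^{\pi^a}(s)/\p\pi^a(s)$ is carried by the revisit terms $d_i^{\pi^a}(s)$ with $i\ge 2$.

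Next I would set up the renewal (convolution) identity that ties the revisit terms to the return kernel $\rho^{\pi^a}$. Writing the $i$-th visit at time $t_i$ as an $(i-1)$-th visit at some earlier time $t_{i-1}$ followed by a first return after $\Delta=t_i-t_{i-1}$ steps, and splitting the discount as $\gamma^{t_i}=\gamma^{t_{i-1}}\gamma^{\Delta}$, the sum over $t_{i-1}$ reproduces $d_{i-1}^{\pi^a}(s)$ while the sum over $\Delta$ factors out a single return mass. This yields $d_i^{\pi^a}(s)=R\, d_{i-1}^{\pi^a}(s)$ with $R=\sum_{u^a}\pi^a(u^a|s)\sum_{t=1}^{\infty}\gamma^{t}\rho^{\pi^a}(s,u^a,t)$, and therefore $d_i^{\pi^a}(s)=R^{\,i-1}d_1^{\pi^a}(s)$. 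The crucial structural fact is that $\rho^{\pi^a}(s,u^a,t)$ is itself independent of $\pi^a(\cdot|s)$, since the excursion leaving $(s,u^a)$ and returning only at time $t$ satisfies $s_{1:t-1}\neq s$ and so never uses the policy at $s$; thus the only $\pi^a(\cdot|s)$-dependence in $R$ is the explicit factor $\pi^a(u^a|s)$, giving $\p R/\p\pi^a(u^a|s)=\sum_{t=1}^{\infty}\gamma^{t}\rho^{\pi^a}(s,u^a,t)$.

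Differentiating the revisit expansion then isolates the first-return contribution $\big(\sum_{t}\gamma^{t}\rho^{\pi^a}(s,u^a,t)\big)d_1^{\pi^a}(s)$ as the leading term of $\p d^{\pi^a}(s)/\p\pi^a(s)$. To make the resulting bound explicit I would appeal to Lemma~\ref{lem:tvd}: the discounted return mass $\sum_{t}\gamma^{t}\rho^{\pi^a}(s,u^a,t)$ is an expectation of a nonnegative integrand bounded by $1$ (a first-return probability, $\sum_{t\ge 1}\rho^{\pi^a}(s,u^a,t)\le 1$, further damped by $\gamma^{t}\le 1$), so it is at most $\epsilon=\sup_{s,u^a}\sum_{t}\gamma^{t}\rho^{\pi^a}(s,u^a,t)\le 1$. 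This simultaneously certifies $\epsilon\le 1$ and bounds the return factor by $\epsilon$, yielding $|\p d^{\pi^a}(s)/\p\pi^a(s)|\le\epsilon\, d_1^{\pi^a}(s)$.

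The main obstacle is the feedback from multiple returns: the exact derivative of the geometric series $d^{\pi^a}(s)=d_1^{\pi^a}(s)/(1-R)$ carries an extra factor $(1-R)^{-2}$ sourced by the higher-order revisit terms ($i\ge 3$), so the clean estimate $\epsilon\, d_1^{\pi^a}(s)$ is the first-return (leading-order) bound, sharp precisely in the regime where the discounted return mass $R\le\epsilon$ is small — that is, when the probability of reversing back to a visited state is small or the number of steps needed to revisit is large. I would therefore structure the argument so that the multiple-return terms are shown to be of order $\epsilon^{2}$ and absorbed as a negligible remainder, which is exactly the role this quantity plays as the $\epsilon_k$ term in Theorem~\ref{theo:weight}.
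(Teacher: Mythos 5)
A preliminary point of comparison: the paper never actually proves this lemma --- it is imported from Liu et al.~\cite{liu2021regret} (``The following lemmas are proposed by Liu\dots''), so there is no in-paper derivation to check you against, and your attempt has to be judged on its own terms. Your renewal skeleton is the natural one, and its structural steps are sound: the first-visit term does not depend on $\pi^a(\cdot|s)$; the return kernel $\rho^{\pi^a}(s,u^a,t)$ is likewise independent of the policy at $s$, since the excursion avoids $s$ at times $1,\dots,t-1$; and the convolution identity $d_i^{\pi^a}(s)=R\,d_{i-1}^{\pi^a}(s)$, hence $d^{\pi^a}(s)=d_1^{\pi^a}(s)/(1-R)$, is correct. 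Two side remarks: the appeal to Lemma~\ref{lem:tvd} is decorative, since $r_{u^a}\le\epsilon\le\gamma<1$ already follows from the disjointness of the first-return events; and the lemma's own displayed definition of $d_1^{\pi^a}(s)$ omits the first-visit constraint (as written it coincides with $d^{\pi^a}(s)$), whereas your reading of $d_1$ as the first-visit term is the one consistent with the decomposition stated above the lemma --- this notational slippage, inherited from the cited source, matters below.

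The genuine gap is exactly the one you flag in your last paragraph, and it cannot be absorbed as a ``negligible remainder'': differentiating the geometric series gives exactly
\begin{equation*}
	\frac{\partial d^{\pi^a}(s)}{\partial \pi^a(u^a|s)}
	= \frac{r_{u^a}\, d_1^{\pi^a}(s)}{(1-R)^2},
	\qquad
	r_{u^a} \eqdef \sum_{t\ge 1}\gamma^t \rho^{\pi^a}(s,u^a,t),
	\qquad
	R \eqdef \sum_{u^a}\pi^a(u^a|s)\, r_{u^a},
\end{equation*}
and the factor $(1-R)^{-2}$ cannot be dropped: the clean inequality $\le \epsilon\, d_1^{\pi^a}(s)$ is \emph{false} in general under your (first-visit) reading of $d_1$. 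For a concrete failure, take a state whose every action returns to it deterministically in one step, so $r_{u^a}=R=\epsilon=\gamma$; the exact derivative is $\gamma\, d_1^{\pi^a}(s)/(1-\gamma)^2$, which exceeds the claimed bound $\gamma\, d_1^{\pi^a}(s)$ by the unbounded factor $(1-\gamma)^{-2}$ as $\gamma\to 1$. Proving the inequality ``up to an $O(\epsilon^2)$ remainder'' is proving a different statement: the lemma asserts a remainder-free bound with no smallness assumption on $\epsilon$, and the regime where your remainder is negligible (small return mass) is precisely what the lemma does not assume. What your argument legitimately yields is the weaker, correct bound $|\partial d^{\pi^a}(s)/\partial \pi^a(u^a|s)| \le \epsilon\, d_1^{\pi^a}(s)/(1-\epsilon)^2$, equivalently $\tfrac{\epsilon}{1-\epsilon}\, d^{\pi^a}(s)$. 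Alternatively, if one differentiates only the explicit linear factor $\pi^a(u^a|s)$ in each $d_i^{\pi^a}(s)$, ignoring the recursive dependence of $d_{i-1}^{\pi^a}(s)$ on the policy at $s$, the sum telescopes to $r_{u^a}\sum_{i\ge 2} d_{i-1}^{\pi^a}(s) = r_{u^a}\, d^{\pi^a}(s)\le \epsilon\, d^{\pi^a}(s)$, which matches the lemma under its literal (constraint-free) definition of $d_1^{\pi^a}$ --- and is presumably the computation behind the cited result. So either the lemma must be restated (with the $(1-\epsilon)^{-2}$ factor, or with $d^{\pi^a}$ in place of first-visit $d_1^{\pi^a}$ and the weaker derivative reading), or the proof as you plan it does not deliver the stated inequality.
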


%

Lemma~\ref{lem:tvd} and \ref{lem:d_pi} can be extended to suit the multi-agent scenario. Besides, we have the following lemma holds in MARL:
\begin{lemma}
	Given two policy $ \bpi $ and $ \tilde{\bpi} $, where $ \bpi=\frac{\exp(Q(s,\bu))}{\sum_{\bu'}\exp(Q(s,\bu'))} $ is defined as the Boltzmann policy, we have:
	\begin{equation}
		\mathbb{E}_{\bu\sim\tilde{\bpi}}[Q(s,\bu)]-\mathbb{E}_{\bu\sim\bpi}[Q(s,\bu)] \leq 1
	\end{equation}	
	\label{lem:inequal}
\end{lemma}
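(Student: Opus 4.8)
The plan is to exploit the variational (maximum-entropy) characterization of the Boltzmann policy rather than to manipulate the two expectations directly. Writing $Z(s)=\sum_{\bu'}\exp(Q(s,\bu'))$, the defining relation $\bpi(\bu|s)=\exp(Q(s,\bu))/Z(s)$ is equivalent to $Q(s,\bu)=\log\bpi(\bu|s)+\log Z(s)$. Since $\log Z(s)$ does not depend on $\bu$, it cancels when the two expectations are subtracted, so I would first reduce the claim to a purely information-theoretic identity:
\[
\mathbb{E}_{\bu\sim\tilde{\bpi}}[Q(s,\bu)]-\mathbb{E}_{\bu\sim\bpi}[Q(s,\bu)]
=\mathbb{E}_{\bu\sim\tilde{\bpi}}[\log\bpi(\bu|s)]-\mathbb{E}_{\bu\sim\bpi}[\log\bpi(\bu|s)].
\]

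Next I would rewrite the right-hand side using entropy and relative entropy. The second term is $-H(\bpi)$, and the first equals $-H(\tilde{\bpi})-D_{\mathrm{KL}}(\tilde{\bpi}\,\|\,\bpi)$ via the decomposition $\mathbb{E}_{\tilde\bpi}[\log\bpi]=\mathbb{E}_{\tilde\bpi}[\log\tilde\bpi]-\mathbb{E}_{\tilde\bpi}[\log(\tilde\bpi/\bpi)]$. Substituting yields
\[
\mathbb{E}_{\bu\sim\tilde{\bpi}}[Q]-\mathbb{E}_{\bu\sim\bpi}[Q]
=H(\bpi)-H(\tilde{\bpi})-D_{\mathrm{KL}}(\tilde{\bpi}\,\|\,\bpi)
\le H(\bpi)-H(\tilde{\bpi})\le H(\bpi),
\]
using nonnegativity of $D_{\mathrm{KL}}$ and of $H(\tilde\bpi)$. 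This is simply the statement that the Boltzmann policy maximizes $\mathbb{E}_\pi[Q]+H(\pi)$, so any competitor $\tilde\bpi$ loses at least its own entropy deficit. If preferred, one may further take $\tilde\bpi$ greedy (the left side is largest there), reducing the target to the soft-value gap $\max_\bu Q(s,\bu)-\mathbb{E}_{\bpi}[Q]\le 1$.

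The remaining and genuinely delicate step is pinning the residual down to the constant $1$. The clean route is to invoke the boundedness already implicit in the Dec-POMDP model: under the usual unit normalization in which the action-value entries at a fixed state lie in $[0,1]$, the difference of two expectations of $Q(s,\cdot)$ over any pair of distributions is at most $\max_\bu Q(s,\bu)-\min_\bu Q(s,\bu)\le 1$, which gives the bound directly and is consistent with the entropy form above. I expect this normalization/entropy-control step to be the main obstacle: the bare entropy bound $H(\bpi)\le\log|\mathbf{U}|$ does not yield a universal constant once the joint action set $\mathbf{U}=U^n$ grows with the number of agents, so the argument must either rest on the bounded-$Q$ normalization or restrict to the small local action sets over which the Boltzmann factors are actually evaluated. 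I would therefore state the boundedness assumption explicitly and carry the constant $1$ through from there.
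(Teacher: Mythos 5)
Your entropy decomposition is correct and clean: writing $Q(s,\bu)=\log\bpi(\bu|s)+\log Z(s)$ and cancelling $\log Z$ does give
\[
\mathbb{E}_{\bu\sim\tilde{\bpi}}[Q(s,\bu)]-\mathbb{E}_{\bu\sim\bpi}[Q(s,\bu)]
=H(\bpi)-H(\tilde{\bpi})-D_{\mathrm{KL}}(\tilde{\bpi}\,\|\,\bpi)\le H(\bpi),
\]
and your suspicion about the final step is exactly right --- but that step is where the proof genuinely breaks. The constant $1$ cannot be recovered from $H(\bpi)\le\log|\mathbf{U}|$, and it cannot be rescued by postulating $Q(s,\cdot)\in[0,1]$: no such normalization exists in the paper. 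The lemma is invoked (step (a) of the regret upper bound in the proof of Theorem~\ref{theo:weight}) with learned value functions $Q_k$ and $Q^*$ driven by arbitrary real rewards, so boundedness in $[0,1]$ is not available. Worse, under that assumption the inequality holds trivially for \emph{any} pair of distributions, so the Boltzmann structure --- which is the entire content of the lemma --- plays no role; you would be proving a different, vacuous statement. Your instinct that no universal constant exists in general is also correct and can be made concrete: with $N$ actions, one of value $c$ and the rest of value $0$, the gap between the greedy and Boltzmann expectations is $cN/(e^c+N)$, which at the optimal $c$ equals $c-1\approx\log N-1$; so the unrestricted claim is false for large action sets, and some restriction is unavoidable.

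The paper avoids this by restricting to \emph{two} joint actions and arguing scale-free, with no boundedness assumption on $Q$. Setting $Q(s,\bu)=p\le q=Q(s,\tilde{\bu})$, it bounds $\mathbb{E}_{\tilde{\bpi}}[Q]\le q$ and computes the Boltzmann expectation exactly, so the gap is at most
\[
q-\frac{pe^p+qe^q}{e^p+e^q}=\frac{q-p}{1+e^{q-p}}\eqdef f(q-p),
\]
and elementary calculus shows $\max_z f(z)=z_0-1<1$ where $z_0\in(1,2)$ solves $1+e^{z_0}=z_0e^{z_0}$. The point is that the exponential weighting itself pins the Boltzmann expectation to the maximum, uniformly in the scale of $Q$ --- precisely the content your $[0,1]$ assumption erases. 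Note also that your route closes immediately under the same two-action restriction: there $H(\bpi)\le\log 2<1$, so your decomposition yields the lemma with no calculus at all (at the cost of the looser constant $\log 2$ versus $z_0-1\approx 0.28$). Had you specialized as the paper does, your argument would be a valid and arguably more transparent alternative; as written, the final step is a genuine gap.
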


\begin{proof}
	Assume there are two joint actions $ \bu $ and $ \tilde{\bu} $. Let $ Q(s,\bu) = p $, $ Q(s,\tilde{\bu}) = q $ and let $ p \leq q $.
	\begin{equation*}
		\begin{aligned}
			\mathbb{E}_{\bu\sim\tilde{\bpi}}[Q(s,\bu)]-\mathbb{E}_{\bu\sim\bpi}[Q(s,\bu)] &\leq q - \frac{pe^p+qe^q}{e^p+e^q} \\
			&= q - \frac{p+qe^{q-p}}{1+e^{q-p}} \\
			&= q - p - \frac{(q-p)e^{q-p}}{1+e^{q-p}}.
		\end{aligned}
	\end{equation*}
	Let $ f(z)=z-\frac{ze^z}{1+e^z} $, the maximum point $ z_0 $ satisfies $ f'(z)=0 $, from which we further have $ 1 + e^{z_0} = z_0e^{z_0} $ where $ z_0\in(1,2) $. Therefore, we conclude:
	\begin{equation*}
		\mathbb{E}_{\bu\sim\tilde{\bpi}}[Q(s,\bu)]-\mathbb{E}_{\bu\sim\bpi}[Q(s,\bu)] \leq f(q-p) \leq z_0-1 \leq 1.
	\end{equation*}
\end{proof}

\begin{remark}
	The inequality in Lemma~\ref{lem:inequal} can be applied to both the situation where we have joint action of more than two agents and the situation regarding per-agent action.
\end{remark}

The following lemma is proposed by Kakade~\cite{kakade2002approximately}. It was originally proposed for the finite MDP, while it will also hold for the continuous scenario that is given by Assumption~\ref{assum:metric} and \ref{assum:continuous}.

\begin{lemma}
	For any policy $ \bpi $ and $ \tilde{\bpi} $, we have:
	\begin{equation}
		\eta(\tilde{\bpi})-\eta(\bpi)=\frac{1}{1-\gamma}\mathbb{E}_{d^{\tilde{\bpi}}(s,\bu)}[A^{\bpi}(s,\bu)],
	\end{equation}
	where $ A^{\bpi}(s,\bu) $ is the advantage function given by $ A^{\bpi}(s,\bu) = Q^{\bpi}(s,\bu)-V^{\bpi}(s) $.
	\label{lem:kakade}
\end{lemma}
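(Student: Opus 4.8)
The plan is to establish the identity by the classical telescoping argument underlying the performance-difference lemma, observing that under the CTDE formulation the joint action $\bu$ behaves exactly as a single action, so the single-agent proof transfers without modification; the only care needed lies in justifying the analytic manipulations in the infinite-horizon, continuous setting.

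First I would use that the initial-state distribution is policy-independent, so $\eta(\bpi)=\mathbb{E}_{\tilde{\bpi}}[V^{\bpi}(s_0)]$, since $V^{\bpi}(s_0)$ depends only on $s_0$ whose law is the same under either policy. Subtracting this from $\eta(\tilde{\bpi})=\mathbb{E}_{\tilde{\bpi}}[\sum_{t=0}^{\infty}\gamma^t r(s_t,\bu_t)]$ and inserting the telescoping identity $-V^{\bpi}(s_0)=\sum_{t=0}^{\infty}\gamma^t(\gamma V^{\bpi}(s_{t+1})-V^{\bpi}(s_t))$ along each trajectory yields
$$\eta(\tilde{\bpi})-\eta(\bpi)=\mathbb{E}_{\tilde{\bpi}}\Bigl[\sum_{t=0}^{\infty}\gamma^t\bigl(r(s_t,\bu_t)+\gamma V^{\bpi}(s_{t+1})-V^{\bpi}(s_t)\bigr)\Bigr].$$

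Next I would take the conditional expectation over $s_{t+1}\sim P(\cdot\,|\,s_t,\bu_t)$ inside the sum: by the Bellman consistency equation for $Q^{\bpi}$ we have $r(s_t,\bu_t)+\gamma\,\mathbb{E}_{s_{t+1}}[V^{\bpi}(s_{t+1})]=Q^{\bpi}(s_t,\bu_t)$, so each summand collapses to $Q^{\bpi}(s_t,\bu_t)-V^{\bpi}(s_t)=A^{\bpi}(s_t,\bu_t)$. Finally I would rewrite the discounted sum as an expectation under the normalized discounted occupancy, using $(1-\gamma)\,\mathbb{E}_{\tilde{\bpi}}[\sum_{t}\gamma^t h(s_t,\bu_t)]=\mathbb{E}_{d^{\tilde{\bpi}}(s,\bu)}[h(s,\bu)]$ — the same convention that makes $\eta(\bpi)=\frac{1}{1-\gamma}\mathbb{E}_{d^{\bpi}(s,\bu)}[r(s,\bu)]$ — which produces the factor $\frac{1}{1-\gamma}$ and gives $\eta(\tilde{\bpi})-\eta(\bpi)=\frac{1}{1-\gamma}\mathbb{E}_{d^{\tilde{\bpi}}(s,\bu)}[A^{\bpi}(s,\bu)]$, as claimed.

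The main obstacle is not the algebra but the analytic legitimacy of these steps in the infinite-horizon continuous setting: I must ensure that $V^{\bpi}$ is bounded so that the boundary term $\gamma^t V^{\bpi}(s_t)$ vanishes and the telescoping series converges, and so that Fubini/dominated convergence permits exchanging the infinite sum with the trajectory expectation. Both requirements follow from Assumptions~\ref{assum:metric} and \ref{assum:continuous}: compactness of $S$, $U$, $Z$ together with continuity of the reward guarantees a uniformly bounded reward and hence a uniformly bounded value function satisfying $|V^{\bpi}|\le \sup|r|/(1-\gamma)$. This is precisely the extension of Kakade's finite-MDP result to the continuous case that the remark following the lemma invokes, and treating $\bu$ as the joint action ensures no additional multi-agent-specific argument is required.
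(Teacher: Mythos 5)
Your proof is correct, and it is worth noting that it does strictly more than the paper does: the paper gives no proof of this lemma at all, simply citing Kakade and Langford (2002) for the finite-MDP case and asserting in one sentence that the result ``will also hold for the continuous scenario'' under Assumptions~\ref{assum:metric} and \ref{assum:continuous}. Your telescoping argument --- writing $\eta(\bpi)=\mathbb{E}_{\tilde{\bpi}}[V^{\bpi}(s_0)]$ via policy-independence of the initial distribution, inserting the pathwise identity $-V^{\bpi}(s_0)=\sum_{t\ge 0}\gamma^t(\gamma V^{\bpi}(s_{t+1})-V^{\bpi}(s_t))$, collapsing each summand to $A^{\bpi}(s_t,\bu_t)$ by Bellman consistency, and converting the discounted sum into an occupancy-measure expectation with the $(1-\gamma)$ normalization matching the paper's convention --- is precisely the classical performance-difference proof that the citation outsources, and your observation that the joint action $\bu$ plays the role of a single action under the paper's state-conditioned-policy convention is exactly the (unstated) reason the paper can import the single-agent lemma wholesale. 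Where you add genuine value is in making explicit the analytic justification the paper waves at: boundedness of $V^{\bpi}$ on compact spaces so that the boundary term $\gamma^{T+1}V^{\bpi}(s_{T+1})$ vanishes and dominated convergence licenses the sum--expectation interchange. One small wording caution: the paper's Assumption~\ref{assum:continuous} asserts continuity of the \emph{action-value} and observation functions rather than of the reward, so the cleanest phrasing is that continuity of $Q^{\bpi}$ on the compact set $S\times\mathbf{U}$ directly bounds $Q^{\bpi}$ and hence $V^{\bpi}$ (equivalently, boundedness of $r$ on a compact domain gives $|V^{\bpi}|\le\sup|r|/(1-\gamma)$); either route closes the argument, so this is a presentational nit, not a gap.
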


\begin{lemma}
	Let $ \epsilon_{\bpi_k} = \sup_{s,\bu} \sum_{t=1}^{\infty}\gamma^t\rho^{\bpi}(s,\bu,t) $, the optimal solution $ p_k $ to a relaxation of optimization problem in Equation~\eqref{eq:ap_obj_1} satisfies relationship as follows:
	\begin{equation}
		p_k(s,\bu)=\frac{1}{Z^*}(D_k(s,\bu)+\epsilon_k(s,\bu)),
	\end{equation}
	where $ D_k(s,\bu)=d^{\bpi_k}(s,\bu)|Q_k-\mathcal{B}^*Q_{k-1}|\exp(-|Q_k-Q^*|)(1+\sum_{i=1}^{n}\prod_{j=1, j \neq i}^{n}\pi^j_k-n\prod_{i=1}^{n}\pi^i_k) $ and $ Z^* $ is the normalization constant.
\end{lemma}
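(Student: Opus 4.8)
The plan is to follow the four-step program of the proof sketch, instantiating each step with the multi-agent structure and controlling the residual. First I would convert the regret into the tractable surrogate. Starting from Lemma~\ref{lem:kakade}, I rewrite $\eta(\bpi^*)-\eta(\bpi_k)$ as the expectation of the advantage $A^{\bpi_k}$ under $d^{\bpi^*}$, and then bound it by $\mathbb{E}_{d^{\bpi_k}(s,\bu)}[|Q_k-Q^*|]$ using Lemma~\ref{lem:tvd} (to swap the distribution $d^{\bpi^*}$ for $d^{\bpi_k}$ at the cost of a total-variation term) together with Lemma~\ref{lem:inequal} (to control the Boltzmann advantage gap); this reproduces the intermediate bound in Equation~\eqref{eq:obj_2}. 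Applying Jensen's inequality to the convex map $g(x)=\exp(-x)$ and taking $-\log$ then yields the relaxed surrogate of Equation~\eqref{eq:obj_3}, which is the objective I minimize over $p_k=w_k\mu$.

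Second I would attach $\lambda$ to the normalization $\sum_{s,\bu}p_k=1$ and $\psi\ge0$ to the non-negativity constraints, forming the Lagrangian $\mathcal{L}(p_k;\lambda,\psi)$ exactly as displayed in Step~2 of the sketch. The substantive work is the third step, computing $\p\mathcal{L}/\p p_k$. Since $p_k$ enters the surrogate only through $Q_k$ and through the discounted state-action distribution $d^{\bpi_k}(s,\bu)=d^{\bpi_k}(s)\prod_{a=1}^{n}\pi_k^a$, I differentiate along two channels. For the channel through $Q_k$, I apply the implicit function theorem to the first-order optimality condition of the weighted Bellman regression, $\sum_{s,\bu}p_k(Q_k-\mathcal{B}^*Q_{k-1})=0$, to obtain $\p Q_k/\p p_k=-[\mathrm{diag}(p_k)]^{-1}[\mathrm{diag}|Q_k-\mathcal{B}^*Q_{k-1}|]$; this is the origin of the Bellman-error factor $|Q_k-\mathcal{B}^*Q_{k-1}|$, while the value-enhancement factor $\exp(-|Q_k-Q^*|)$ descends directly from the surrogate.

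The main obstacle, and the genuinely multi-agent part, is the second channel: differentiating $d^{\bpi_k}(s,\bu)$ through the product of Boltzmann policies. I would split $\p d^{\bpi_k}(s,\bu)/\p p_k$ into a contribution from $\p\bpi_k(\bu|s)/\p p_k$ and a contribution from $\p d^{\bpi_k}(s)/\p p_k$. For the policy factor, differentiating $\prod_{a}\pi_k^a$ and summing the per-agent softmax derivatives over all $n$ agents produces the factor $1+\sum_{i=1}^{n}\prod_{j\neq i}\pi_k^j-n\prod_{i=1}^{n}\pi_k^i$, which I must verify collects into exactly this form (using $\sum_i\prod_{j\neq i}\pi_k^j=\sum_i\p(\prod_a\pi_k^a)/\p\pi_k^i$, with the $-n\prod_i\pi_k^i$ accounting for the $n$ per-agent normalization terms). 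For the state-visitation factor, I invoke the multi-agent extension of Lemma~\ref{lem:d_pi} to bound $|\p d^{\bpi_k}(s)/\p p_k|$ by a quantity controlled by $\epsilon_{\bpi_k}$, so that its entire contribution can be absorbed into the residual $\epsilon_k(s,\bu)$ and is negligible precisely when the revisit probability is small or the revisit time is large.

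Finally I would set $\p\mathcal{L}/\p p_k=0$ and solve. Collecting the on-policiness factor $d^{\bpi_k}(s,\bu)$, the Bellman-error factor, the value-enhancement factor, and the joint-action-probability factor recovers $D_k(s,\bu)$ up to the multipliers; the complementary-slackness conditions $\psi\ge0$, $p_k\ge0$, $\psi\T p_k=0$ enforce feasibility by activating $\psi$ exactly on the inactive coordinates, and the normalization constraint fixes the constant $Z^*$. This yields $p_k(s,\bu)=\frac{1}{Z^*}(D_k(s,\bu)+\epsilon_k(s,\bu))$ as claimed. I expect the bookkeeping of the product-rule derivative over $n$ agents, and confirming that it condenses to the stated joint-action-probability polynomial, to be the step demanding the most care.
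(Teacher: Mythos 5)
Your Steps 2--4 (the Lagrangian with multipliers $\lambda,\psi$, the IFT applied to the first-order condition of the weighted Bellman regression, the two-channel chain rule through $Q_k$ and through $d^{\bpi_k}(s,\bu)$, absorbing the state-visitation derivative into $\epsilon_k$ via the multi-agent extension of Lemma~\ref{lem:d_pi}, and the closing KKT/normalization argument) coincide with the paper's proof. The genuine problem is your Step 1. You apply Lemma~\ref{lem:kakade} with $\tilde{\bpi}=\bpi^*$ and $\bpi=\bpi_k$, writing the regret as $\frac{1}{1-\gamma}\mathbb{E}_{d^{\bpi^*}(s,\bu)}[A^{\bpi_k}(s,\bu)]$, and then propose to swap $d^{\bpi^*}$ for $d^{\bpi_k}$ using Lemma~\ref{lem:tvd}. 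This orientation destroys exactly the structure you need: once the expectation is moved to $d^{\bpi_k}$, the advantage of $\bpi_k$ under its own state-action distribution integrates to zero (since $\mathbb{E}_{\bu\sim\bpi_k}[Q^{\bpi_k}(s,\bu)]=V^{\bpi_k}(s)$), so the entire regret ends up inside the total-variation error term $C\cdot D(d^{\bpi^*},d^{\bpi_k})$, which is not small, depends on the inaccessible $\bpi^*$, and contains no $|Q_k-Q^*|$ dependence at all; minimizing it over $p_k$ is then vacuous, and you cannot recover Equation~\eqref{eq:obj_2} this way. The paper uses the opposite orientation, $\bpi=\bpi^*$ and $\tilde{\bpi}=\bpi_k$, so the regret equals $-\frac{1}{1-\gamma}\mathbb{E}_{d^{\bpi_k}(s,\bu)}[A^{\bpi^*}(s,\bu)]=\frac{1}{1-\gamma}\mathbb{E}_{d^{\bpi_k}(s,\bu)}[V^*(s)-Q^*(s,\bu)]$, which is already under the accessible distribution with no swap needed; it then inserts $\pm Q_k(s,\bu^*)$ and $\pm Q_k(s,\bu)$ and bounds the middle difference by $1$ via Lemma~\ref{lem:inequal}, producing $\mathbb{E}_{d^{\bpi_k,\bpi^*}}|Q^*-Q_k|$ plus constants, after which the mixed distribution $d^{\bpi_k,\bpi^*}$ is replaced by the surrogate $d^{\bpi_k}$. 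Note also that Lemma~\ref{lem:tvd} is never used for distribution swapping in the paper; its only role is to support Lemma~\ref{lem:d_pi}.

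A smaller but concrete bookkeeping error: you attribute the full polynomial $1+\sum_{i=1}^{n}\prod_{j\neq i}\pi_k^j-n\prod_{i=1}^{n}\pi_k^i$ to the product-rule differentiation of $\prod_a\pi_k^a$. In the paper the leading $1$ does not come from that derivative: the distribution channel contributes only $d^{\bpi_k}(s,\bu)\left(\sum_{i}\prod_{j\neq i}\pi^j_k-n\prod_{i}\pi^i_k\right)\frac{\p Q_k}{\p p_k}$ (Equation~\eqref{eq:part_gard}), while the $1$ arises from the separate direct term $d^{\bpi_k}(s,\bu)\frac{\p Q_k}{\p p_k}$ in $\frac{\p \mathcal{L}}{\p p_k}$ (Equation~\eqref{eq:kkt_2}), i.e., from differentiating $\exp(-|Q_k-Q^*|)$ with the distribution held fixed. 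Since you do say the two channels are summed at the end, this is repairable, but as written the identity you set out to ``verify'' for the policy-product derivative is the wrong target and that verification would not close.
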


\begin{proof}
	Suppose $ \bu^* \sim \bpi^*(s) $. Let $ \bpi=\bpi^* $ and $ \tilde{\bpi}=\bpi_k $ in Lemma~\ref{lem:kakade}, we have
	\begin{equation}
		\begin{aligned}
			&\eta(\bpi^*)-\eta(\bpi_k) \\
			&=-\frac{1}{1-\gamma}\mathbb{E}_{d^{\bpi_k}(s,\bu)}[A^{\bpi^*}(s,\bu)] \\
			&=\frac{1}{1-\gamma}\mathbb{E}_{d^{\bpi_k}(s,\bu)}[V^*(s)-Q^*(s,\bu)] \\
			&=\frac{1}{1-\gamma}\mathbb{E}_{d^{\bpi_k}(s,\bu)}[V^*(s)-Q_k(s,\bu^*)+Q_k(s,\bu^*)-Q_k(s,\bu)+Q_k(s,\bu)-Q^*(s,\bu)]\\
			&\stackrel{(a)}{\leq}\frac{1}{1-\gamma}\left[\mathbb{E}_{d^{\bpi_k}(s)}(Q^*(s,\bu^*)-Q_k(s,\bu^*))+\mathbb{E}_{d^{\bpi_k}(s,\bu)}(Q_k(s,\bu)-Q^*(s,\bu))+1\right] \\
			&\leq\frac{1}{1-\gamma}\left[\mathbb{E}_{d^{\bpi_k}(s)}|Q^*(s,\bu^*)-Q_k(s,\bu^*)|+\mathbb{E}_{d^{\bpi_k}(s,\bu)}|Q_k(s,\bu)-Q^*(s,\bu)|+1\right] \\
			&=\frac{2}{1-\gamma}\left[\mathbb{E}_{d^{\bpi_k,\bpi^*}}|Q^*(s,\bu)-Q_k(s,\bu)|+1\right]
		\end{aligned}
	\label{eq:upper}
	\end{equation}
	where $ d^{\bpi_k,\bpi^*}(s,\bu)=d^{\bpi_k}(s)\frac{\bpi_k+\bpi^*}{2}(\bu|s) $ and (a) uses Lemma~\ref{lem:inequal}.
\end{proof}

Since the original optimization is non-tractable, we consider this upper bound to obtain a closed-form solution. Therefore, we replace the objective in Equation~\eqref{eq:ap_obj_1} with the upper bound in Equation~\eqref{eq:upper} and solve the relaxed optimization problem, given by:
\begin{equation}
	\begin{aligned}
		\min_{p_k} \quad &\mathbb{E}_{d^{\bpi_k}(s,\mathbf{u})}[|Q_k-Q^*|(s,\mathbf{u})] \\
		\textrm{s.t.} \quad &Q_k=\arg\min_{Q \in \mathcal{Q}} \mathbb{E}_{p_k}[(Q-\mathcal{B}^*Q_{k-1})^2(s,\bu)], \\
		&\sum_{s,\bu}p_k(s,\bu)=1, \quad p_k(s,\bu) \ge 0,
	\end{aligned}
	\label{eq:ap_obj_2}
\end{equation}

As we cannot access $ \bpi^* $, we use $ d^{\bpi_k}(s,\bu) $ to replace $ d^{\bpi_k,\bpi^*} $. The best surrogate available is $ \bpi_k $. The objective in Equation~\eqref{eq:ap_obj_2} can be further relaxed with Jensen's inequality. Consider a convex function $ g(x) $ on the real space $ \mathbb{R} $, the inequality is given by:
\begin{equation}
	\mathbb{E}[g(X)] \ge g(\mathbb{E}[X]).
	\label{eq:jensen}
\end{equation} 
According to Equation~\eqref{eq:jensen}, we select the convex function $ g(x)=\exp(-x) $, and the objective can be further relaxed as:
\begin{equation}
	\begin{aligned}
		\min_{p_k} \quad &-\log\mathbb{E}_{d^{\bpi_k}(s,\mathbf{u})}[\exp(-|Q_k-Q^*|)(s,\mathbf{u})] \\
		\textrm{s.t.} \quad &Q_k=\arg\min_{Q \in \mathcal{Q}} \mathbb{E}_{p_k}[(Q-\mathcal{B}^*Q_{k-1})^2(s,\bu)], \\
		&\sum_{s,\bu}p_k(s,\bu)=1, \quad p_k(s,\bu) \ge 0,
	\end{aligned}
	\label{eq:ap_obj_3}
\end{equation}

In order to handle optimization problem in Equation~\eqref{eq:ap_obj_3}, we follow the standard procedures of Lagrangian multiplier method, which is:
\begin{equation}
	\mathcal{L}(p_k;\lambda,\psi)=-\log\mathbb{E}_{d^{\bpi_k}(s)}[\exp|Q_k-Q^*|(s,\mathbf{u})]+\lambda(\sum_{s,\mathbf{u}}p_k-1)-\psi\T p_k,
\end{equation}

After constructing the Lagrangian, we further compute some gradients that will be used in calculating the optimal solution. We first calculate the $ \frac{\p Q_k}{\p p_k} $ according to the implicit function theorem (IFT). Based on the first constraint in Equation~\eqref{eq:ap_obj_3}, we aim to find the minimum $ Q_k $ to satisfy the $ \arg\min(\cdot) $, and therefore we need to ensure the derivative of the term inside $ \arg\min(\cdot) $ (we use $ f(p_k,Q_k) $ to denote this term) to be zero, which is:
\begin{equation}
	f_{Q_k}'=2\sum_{\bu}p_k(Q_k-\mathcal{B}^*Q_{k-1})=0
\end{equation}

We can notice that $ F(p_k,Q_k):f_{Q_k}'=0 $ is an implicit function regarding $ Q_k $ and $ p_k $. Hence, we apply the IFT on the $ F(p_k,Q_k) $ considering the Hessian matrices of $ p_k $ and $ Q_k $ in $ f(p_k,Q_k) $ as follows:
\begin{equation}
	\frac{\p Q_k}{\p p_k}=-\frac{F_{p_k}'}{F_{Q_k}'}=-[\mathrm{diag}(p_k)]^{-1}[\mathrm{diag}(Q_k-\mathcal{B}^*Q^*_{k-1})].
	\label{eq:ift}
\end{equation}

Next, we derive the expression for $ \frac{\p d^{\bpi_k}(s,\bu)}{\p p_k} $ in the following equation:
\begin{equation}
	\begin{aligned}
		\frac{\p d^{\bpi_k}(s,\bu)}{\p p_k}&=\frac{\p d^{\bpi_k}(s,\bu)}{\p \bpi_k}\frac{\p \bpi_k}{\p Q_k}\frac{\p Q_k}{\p p_k} \\
		&=\mathrm{diag}\left(d^{\bpi_k}(s)\prod_{\substack{j=1 \\ j \neq i}}^{n}\pi_k^j+\epsilon_0(s)\right)\frac{\p \bpi_k}{\p Q_k}\frac{\p Q_k}{\p p_k} \\
		&\stackrel{(b)}{=}\mathrm{diag}\left(d^{\bpi_k}(s)\prod_{\substack{j=1 \\ j \neq i}}^{n}\pi_k^j+\epsilon_0(s)\right)\mathrm{diag}\left(\frac{\p \pi_k^i}{\p Q_k}\right)\frac{\p Q_k}{\p p_k} \\
		&\stackrel{(c)}{=}\mathrm{diag}\left(d^{\bpi_k}(s)\prod_{\substack{j=1 \\ j \neq i}}^{n}\pi_k^j+\epsilon_0(s)\right)\prod_{i=1}^{n}\pi_k^i\mathrm{diag}(1-\pi_k^i)\frac{\p Q_k}{\p p_k} \\
		&=d^{\bpi_k}(s,\bu)\left(\sum_{i=1}^{n}\prod_{\substack{j=1 \\ j \neq i}}^{n}\pi^j_k-n\prod_{i=1}^{n}\pi^i_k\right)\frac{\p Q_k}{\p p_k}+\epsilon_0(s)\prod_{i=1}^{n}\pi_k^i\mathrm{diag}(1-\pi_k^i)\frac{\p Q_k}{\p p_k},
	\end{aligned}
	\label{eq:part_gard}
\end{equation}
where $ \epsilon_0(s)=\frac{\p d^{\bpi_k}(s,\bu)}{\p \bpi_k} $ is a small quantity provided by Lemma~\ref{lem:d_pi}. Besides, (b) and (c) are based on the the definition of the Boltzmann policy and Assumption~\ref{assum:policy}.

Since we have all the preparations ready, we now compute the Lagrangian by applying the Karush–Kuhn–Tucker (KKT) condition. We let the Lagrangian gradient to be zero, i.e.,
\begin{equation}
	\frac{\p \mathcal{L}(p_k;\lambda,\psi)}{\p p_k} = 0
	\label{eq:kkt_1}
\end{equation} 

Besides, the partial derivative of the Lagrangian can be computed as:
\begin{equation}
	\begin{aligned}
		\frac{\p \mathcal{L}(p_k;\lambda,\psi)}{\p p_k}&=-\frac{\p \log\mathbb{E}_{d^{\bpi_k}(s,\mathbf{u})}[\exp(-|Q_k-Q^*|)(s,\mathbf{u})]}{\p p_k} +\lambda-\psi_{s,\bu} \\
		&=\frac{1}{Z}\exp(-|Q_k-Q^*|)(\frac{\p d^{\bpi_k}(s,\bu)}{\p p_k} + d^{\bpi_k}(s,\bu)\frac{\p Q_k}{\p p_k})+\lambda-\psi_{s,\bu},
	\end{aligned}
	\label{eq:kkt_2}
\end{equation}
where $ Z=\mathbb{E}_{s',\bu'\sim d^{\bpi_k}(s,\bu)}\exp(-|Q_k-Q^*|)(s',\bu') $.

Based on Equation~\eqref{eq:kkt_1} and~\eqref{eq:kkt_2}, and substituting the expression of $ \frac{\p Q_k}{\p p_k} $ and $ \frac{\p d^{\bpi_k}(s,a)}{\p p_k} $ with the derived results in Equation~\eqref{eq:ift} and~\eqref{eq:part_gard}, we obtain:
\begin{equation}
	\begin{aligned}
		p_k(s,a)=&\frac{1}{Z(\psi_{s,\bu}^*-\lambda^*)}\left[d^{\bpi_k}(s,\bu)|Q_k-\mathcal{B}^*Q_{k-1}|\exp(-|Q_k-Q^*|)\left(1+\sum_{i=1}^{n}\prod_{\substack{j=1 \\ j \neq i}}^{n}\pi^j_k-n\prod_{i=1}^{n}\pi^i_k\right) \right.  \\
		&\left.+\epsilon_0|Q_k-\mathcal{B}^*Q_{k-1}|\exp(-|Q_k-Q^*|)\prod_{i=1}^{n}\pi_k^i\mathrm{diag}(1-\pi_k^i)\right],
	\end{aligned}
	\label{eq:kkt_3}
\end{equation}

According to Lemma~\ref{lem:d_pi}, the value of $ \epsilon_0 $ is smaller than $ d^{\bpi_k}(s) $ so the second term will not influence the sign of the equation. Equation~\eqref{eq:kkt_3} will always be larger or equal to zero. By KKT condition, when Equation~\eqref{eq:kkt_3} equal to zero, we let $ \psi_{s,\bu}^*=0 $ because the value of $ \psi_{s,\bu}^* $ will not affect $ p_k $. In the contrast, when Equation~\eqref{eq:kkt_3} is larger than 0, the $ p_k $ should equal to zero. Therefore, Equation~\eqref{eq:kkt_3} can be simplify as follows:
\begin{equation}
	p_k(s,\mathbf{u})=\frac{1}{Z^*}(D_k(s,\mathbf{u})+\epsilon_k(s,\mathbf{u})),
\end{equation}
where we have
\begin{equation}
	\begin{aligned}
		&D_k(s,\bu)=d^{\bpi_k}(s,\bu)|Q_k-\mathcal{B}^*Q_{k-1}|\exp(-|Q_k-Q^*|)\left(1+\sum_{i=1}^{n}\prod_{\substack{j=1 \\ j \neq i}}^{n}\pi^j_k-n\prod_{i=1}^{n}\pi^i_k\right), \\
		&\epsilon_k=\epsilon_0|Q_k-\mathcal{B}^*Q_{k-1}|\exp(-|Q_k-Q^*|)\prod_{i=1}^{n}\pi_k^i\mathrm{diag}(1-\pi_k^i)
	\end{aligned}
\end{equation}


This concludes the proof.

\section{Proof of Theorem~\ref{theo:cond}}
\label{proof:theo_2}

We first reform Equation~\eqref{eq:joint_prob} via factorization of extracting $ \pi_k^a $, given by:
\begin{equation}
	f = 1+\prod_{\substack{i=1 \\ i \neq a}}^{n}\pi^i_k+\pi_k^a(\sum_{\substack{i=1 \\ i \neq a}}^{n}\prod_{\substack{j=1 \\ j \neq i,a}}^{n}\pi_k^i\pi_k^j-n\prod_{\substack{i=1 \\ i \neq a}}^{n}\pi^i_k).
	\label{eq:fac}
\end{equation}

Let $ g $ represent the term $ \sum_{i=1,i \neq a}^{n}\prod_{j=1,j \neq i,a}^{n}-n\prod_{i=1,i \neq a}^{n}\pi^i_k $ in Equation~\eqref{eq:fac}. Based on the fact that the agent's action probability is within the range $ [0,1] $, if $ g > 0 $, we have $ \pi_k^a = 1 $ to ensure the $ f $ reaches its maximum; if $ g < 0 $, we let $ \pi_k^a = 0 $ for the same purpose. In the case that $ g=0 $, the value of $ \pi_k^a $ can be either 0 or 1, as the third term, including $ g $ will be eventually canceled out, and the value of $ \pi_k^a $ will not affect $ f_{\rm max} $. Therefore, it is obvious that the condition for reaching $ f_{\rm max} $ is that all the values of $ \pi_k^a $ must be on their boundary.

After determining the boundary condition, from Equation~\eqref{eq:fac}, the second term, which is $ \prod_{i=1,i \neq a}^{n}\pi^i_k $, will be either 0 or 1. If it equals 1, indicating all the $ \pi_k^i $ with $ i \neq a $ is 1; if the second term equals 0, at least one of the agents' probabilities $ \pi_k^i $ is 0. Assuming that the number of $ \pi_k^i $ equaling 0 is $ N_{\pi} $, we have:
\begin{equation}
	g=
	\begin{cases}
			-1 &N_{\pi} = 0 \\
			1 &N_{\pi} = 1\\
			0 &N_{\pi} \geq 2.
		\end{cases}
	\label{eq:g}
\end{equation} 

According to Equation~\eqref{eq:g}, we can numerically compute $ f_{\rm max} $ as follows:
\begin{equation}
	f_{\rm max}=
	\begin{cases}
			2, &N_{\pi} = 0, \pi_k^a=0 \text{ or } N_{\pi}=1, \pi_k^a=1 \\
			1, &N_{\pi} \geq 2,\forall \pi_k^a.
		\end{cases}
\end{equation}

Since $ \pi_k^a $ can be the probability of any selected agent, such discussion is applicable for the probabilities of all agents in the environment. So far, we successfully proved the Theorem~\ref{theo:cond} that, to maximize the joint action probability function $ f $ without loss of the generality, we shall let all $ \pi_k^i $ equal to 1, but at least one of the probabilities $ \pi_k^a $ be 0.

This concludes the proof.

\section{Algorithms}
\label{subsec:alg}
In this section, we provide the pseudo-codes for \name and \name Approximation in Algorithms~\ref{alg:remer} and \ref{alg:remer_approx}, respectively.

\begin{algorithm}
	\caption{\name}
	\label{alg:remer}
	\begin{algorithmic}[1]
		\STATE Initialize step, learning rate $ \alpha $ and replay buffer $ \mathcal{D} $, and set $ \theta^- = \theta $
		\FOR{$ \text{step} =1:\text{step}_{max} $ }
		\STATE $ k=0, s_0 = $ initial state
		\WHILE{$ s_k \neq $ terminal and $ k< $ episode limit}
		\FOR{each agent $ a $}
		\STATE $ \tau^a_k = \tau^a_{k-1} \cup {(o_k, u_{k-1})} $
		\STATE $ u^a_k = 
		\begin{cases}
			\arg\max_{u^a_k}Q(\tau^a_k,u^a_k) & \text{with probability } 1 - \epsilon \\
			\mathrm{randint}(1,|U|) & \text{with probability } \epsilon
		\end{cases} $
		\ENDFOR
		\STATE Obtain the reward $ r_k $ and next state $ s_{k+1} $
		\STATE Store the current trajectory into replay buffer $ \mathcal{D} = \mathcal{D} \cup {(s_k,\bu_k, r_k, s_{k+1})} $
		\STATE $ k = k + 1, \text{step}=\text{step}+1 $
		\ENDWHILE
		\STATE Collect $ b $ samples from the replay buffer $ \mathcal{D} $ following uniform distribution $ \mu $.
		\FOR{each timestep $ k $ in each episode in batch $ b $}
		\STATE Evaluate $ Q_k $, $ Q^* $ and target values
		\STATE Obtain the utilities $ Q_a $ from agents' local networks, and compute the individual policy $ \pi_k^a $
		\STATE Compute the weight: \\ $ w_k \propto |Q_k-\mathcal{B}^*Q_{k-1}|\exp(-|Q_k-Q^*|)\left(1+\sum_{i=1}^{n}\prod_{j=1, j \neq i}^{n}\pi^j_k-n\prod_{i=1}^{n}\pi^i_k\right) $
		\ENDFOR
		\STATE Minimize the Bellman error for $ Q_k $ weighted by $ w_k $, update the network parameter $ \theta $: \\ $ \theta = \theta - \alpha(\nabla_\theta \frac{1}{b} \sum_{i}^{b}w_k(Q_k -y_i)^2) $.
		\IF{update-interval steps have passed}
		\STATE $ \theta^-=\theta $
		\ENDIF
		\ENDFOR
	\end{algorithmic}
\end{algorithm}


\begin{algorithm}
	\caption{\name Approximation}
	\label{alg:remer_approx}
	\begin{algorithmic}[1]
		\STATE Initialize step, learning rate $ \alpha $ and replay buffer $ \mathcal{D} $, and set $ \theta^- = \theta $
		\FOR{$ \text{step} =1:\text{step}_{max} $ }
		\STATE $ k=0, s_0 = $ initial state
		\WHILE{$ s_k \neq $ terminal and $ k< $ episode limit}
		\FOR{each agent $ a $}
		\STATE $ \tau^a_k = \tau^a_{k-1} \cup {(o_k, u_{k-1})} $
		\STATE $ u^a_k = 
		\begin{cases}
			\arg\max_{u^a_k}Q(\tau^a_k,u^a_k) & \text{with probability } 1 - \epsilon \\
			\mathrm{randint}(1,|U|) & \text{with probability } \epsilon
		\end{cases} $
		\ENDFOR
		\STATE Obtain the reward $ r_k $ and next state $ s_{k+1} $
		\STATE Store the current trajectory into replay buffer $ \mathcal{D} = \mathcal{D} \cup {(s_k,\bu_k, r_k, s_{k+1})} $
		\STATE $ k = k + 1, \text{step}=\text{step}+1 $
		\ENDWHILE
		\STATE Collect $ b $ samples from the replay buffer $ \mathcal{D} $ following uniform distribution $ \mu $.
		\FOR{each timestep $ k $ in each episode in batch $ b $}
		\STATE Evaluate $ Q_k $, $ Q^* $ and target values
		\STATE Obtain the utilities $ Q_a $ from agents' local networks, and compute the individual policy $ \pi_k^a $
		\STATE Re-scale the weights to high $ \alpha_h $, medium $ \alpha_m $, and low $ \alpha_l $ based on one individual policy $ \pi_k^a $ and other policies $ \pi_k^{-a} $:
		\STATE Compute the weight: \\ $ w_k \propto |Q_k-\mathcal{B}^*Q_{k-1}|\exp(-|Q_k-Q^*|)\cdot
		\begin{cases}
			\alpha_h & \text{when } \pi_k^a \approx 0 $ and $ \pi_k^a \ll \pi_k^{-a} \\
			\alpha_l & \text{when } \prod_{i=1}^{n}\pi_k^i \approx 0 $, or $ \prod_{i=1}^{n}\pi_k^i \approx 1 \\
			\alpha_m & \text{elsewhere}
		\end{cases} $
		\ENDFOR
		\STATE Minimize the Bellman error for $ Q_k $ weighted by $ w_k $, update the network parameter $ \theta $: \\ $ \theta = \theta - \alpha(\nabla_\theta \frac{1}{b} \sum_{i}^{b}w_k(Q_k -y_i)^2) $.
		\IF{update-interval steps have passed}
		\STATE $ \theta^-=\theta $
		\ENDIF
		\ENDFOR
	\end{algorithmic}
\end{algorithm}

\section{Environment Details}
\label{subsec:setup}

We use more recent baselines (i.e., FOP and DOP) that are known to outperform QTRAN \cite{son2019qtran} and QPLEX \cite{wang2020qplex} in the evaluation. In general, we tend to choose baselines that are more closely related to our work and most recent. This motivated the choice of QMIX (baseline for value-based factorization methods), WQMIX (close to our work that uses weighted projections so better joint actions can be emphasized), VDAC \cite{su2021value}, FOP \cite{zhang2021fop}, DOP \cite{wang2020dop} (SOTA actor-critic based methods). We acquired the results of QMIX, WQMIX based on their hyper-parameter tuned versions from pymarl2\cite{hu2021riit} and implemented our algorithm based on it.

\subsection{Predator-Prey}
A partially observable environment on a  grid-world predator-prey task is used to model relative overgeneralization problem \cite{bohmer2020deep} where 8 agents have to catch 8 prey in a 10 × 10 grid. Each agent can either move in one of the 4 compass directions, remain still, or try to
catch any adjacent prey. Impossible actions, i.e., moving into an occupied target position or catching when there is no adjacent prey, are treated as unavailable. If two adjacent agents execute the catch action, a prey is caught and both the prey and the catching agents are removed from the grid. An agent’s observation is a 5 × 5 sub-grid centered around it, with one channel showing agents and another indicating prey.  An episode ends if all agents have been removed or after 200 steps. Capturing a prey is rewarded with r = 10, but unsuccessful attempts by single agents are punished by a negative reward p. In this paper, we consider two sets of experiments with $p$ = (0, -0.5, -1.5, -2). The task is similar to the matrix game proposed by \cite{son2019qtran} but significantly more complex, both in terms of the optimal policy and in the number of agents. 

\subsection{SMAC}
For the experiments on StarCraft II micromanagement, we follow the setup of SMAC \cite{samvelyan2019starcraft} with open-source implementation including QMIX \cite{rashid2018qmix}, WQMIX \cite{rashid2020weighted}, QPLEX \cite{wang2020qplex}, FOP \cite{zhang2021fop}, DOP \cite{wang2020dop} and VDAC \cite{su2021value}. We consider combat scenarios where the enemy units are controlled by the StarCraft II built-in AI and the friendly units are controlled by the algorithm-trained agent. The possible options for built-in AI difficulties are Very Easy, Easy, Medium, Hard, Very Hard, and Insane, ranging from 0 to 7. We carry out the experiments with ally units controlled by a learning agent while built-in AI controls the enemy units with difficulty = 7 (Insane). Depending on the specific scenarios(maps), the units of the enemy and friendly can be symmetric or asymmetric. At each time step each agent chooses one action from discrete action space, including noop, move[direction], attack[enemy\_id], and stop. Dead units can only choose noop action. Killing an enemy unit will result in a reward of 10 while winning by eliminating all enemy units will result in a reward of 200. The global state information is only available in the centralized critic. Each baseline algorithm is trained with 4 random seeds and evaluated every 10k training steps with 32 testing episodes for main results, and with 3 random seeds for ablation results and additional results.

\subsection{Implementation Details and Hyperparameters}

\begin{table}[h]
	\centering
	\caption{Hyperparameter value settings.}
	\label{tab:hyper}
	\begin{tabular}{lc}
		\toprule
		Hyperparameter & Value \\
		\midrule 
		Batch size & 128 \\
		Replay buffer size & 10000 \\
		Target network update interval & Every 200 episodes \\
		Learning rate & 0.001 \\
		TD-lambda & 0.6 \\
		\bottomrule
	\end{tabular}
\end{table}

In this section, we introduce the implementation details and hyperparameters we used in the experiment. We carried out the experiments on NVIDIA 2080Ti with fixed hyperparameter settings.
Recently \cite{hu2021riit} demonstrated that MARL algorithms are significantly influenced by code-level optimization and other tricks, e.g. using TD-lambda, Adam optimizer, and grid-searched/Bayesian optimized~\cite{mei2022bayesian} and hyperparameters (where many state-of-the-art are already adopted), and proposed fine-tuned QMIX and WQMIX, which is demonstrated with significant improvements from their original implementation. We implemented our algorithm based on its open-sourced codebase and acquired the results of QMIX and WQMIX from it. 

We use one set of hyperparameters for each environment, i.e., no tuned hyperparameters for individual maps. We use epsilon greedy for action selection with annealing from $\epsilon$ = 0.995 decreasing to $\epsilon$ = 0.05 in 100000 training steps in a linear way. The performance for each algorithm is evaluated for 32 episodes every 1000 training steps. Additional hyperparameter values are provided in Table~\ref{tab:hyper}.


\end{document}